%% file: tare17.tex
\pdfoutput=1
\documentclass[10pt]{article}

\usepackage[preprint]{tmlr}

\usepackage{amsmath,amssymb,amsthm,mathtools}
\usepackage{url}
\usepackage[dvipsnames]{xcolor}
\usepackage{enumitem}
\usepackage{booktabs}
\usepackage{etoolbox}
\AtBeginEnvironment{table}{\setlength{\belowcaptionskip}{6pt}}
\usepackage{graphicx}
\usepackage{thmtools}
\usepackage{hyperref}
\usepackage{cleveref}

\newtheorem{theorem}{Theorem}[section]
\newtheorem{proposition}[theorem]{Proposition}
\newtheorem{corollary}[theorem]{Corollary}
\newtheorem{lemma}[theorem]{Lemma}
\theoremstyle{definition}
\declaretheorem[style=definition, sibling=theorem, name=Definition]{definition}
\theoremstyle{remark}
\newtheorem{remark}[theorem]{Remark}

\newcommand{\R}{\mathbb{R}}

\newcommand{\by}{\mathbf{y}}
\newcommand{\btheta}{\boldsymbol{\theta}}
\newcommand{\bdelta}{\boldsymbol{\delta}}
\newcommand{\bw}{\mathbf{w}}
\newcommand{\diag}{\operatorname{diag}}

\title{Influence Is the Target-Response Jacobian Times Residuals}

\author{\name Paul Geertsema \email paul.geertsema@vlerick.com \\
      \addr Vlerick Business School
      \AND
      \name Simon Geertsema \email simon.geertsema@universiteitleiden.nl \\
      \addr Leiden University}

\def\month{MM}
\def\year{YYYY}
\def\openreview{\url{https://openreview.net/forum?id=XXXX}}

\begin{document}
\maketitle

\begin{abstract}
We introduce the target-response Jacobian
$J = [\partial \hat y_i/\partial y_j]$, which measures how model
predictions respond to changes in individual training targets. $J$
complements training-data attribution (TDA) methods, which characterize
sensitivity to the \emph{weights} placed on training observations in
the loss rather than to their \emph{targets}, and can be computed
matrix-free at the same cost as influence-based TDA methods. We show
that sample-weight influence $\mathcal{I}$ and $J$ follow from the same
implicit differentiation argument and differ only by a per-example multiplier:
for pointwise losses, $\mathcal{I} = J\operatorname{diag}(m)$ exactly
at any stationary point with an invertible Hessian. For canonical-link
losses (squared error, logistic, Poisson) the multiplier is the
training residual, $\mathcal{I} = J\operatorname{diag}(r)$, and softmax
cross-entropy obeys a multiclass version. The identity predicts three
ways in which influence fails when used in place of $J$: small
residuals shrink its predicted shift towards zero (interpolation
collapse), residuals of mixed sign wash out its direction (direction
washout), and under softmax cross-entropy it collapses a training
point's per-class sensitivities towards its own true class (true-class
collapse). We score both operators against refitting with a
per-prediction $R^2$ across target perturbations, with no rescaling. On
tanh networks trained with squared error (Friedman \#1) and logistic
loss (MAGIC, HIGGS), with $N$ from 2,000 to $10^6$, $J$ reaches $R^2$
between 0.99 and 1.00, whereas influence's $R^2$ is negative on every
dataset. On an MNIST softmax classifier, influence's $R^2$ is again
negative, while $J$ reaches 0.92. On HIGGS, one row of $J$, holding all
$10^6$ sensitivities of one prediction, takes under 0.2 seconds on an
RTX 3090 after a one-time setup of under five minutes, whereas
refitting would take $10^6$ refits.
\end{abstract}

\section{Introduction}
\label{sec:intro}

This paper introduces the target-response Jacobian
\begin{equation}
  J \;=\; \bigl[\,\partial \hat y_i / \partial y_j\,\bigr],
\end{equation}
which measures how a model's predictions respond to changes in
individual training targets without retraining. $J$ complements
existing training-data attribution (TDA) methods, which measure how a
prediction depends on the weight given to each training example in
the loss. $J$ measures how it depends on the training targets
themselves, which no existing method does for general differentiable
learners. We show
that $J$ can be computed matrix-free at scale. For $S$ predictions and $N$ training points, $J$
is an $S\times N$ matrix whose row $i$ gives one prediction's
sensitivity to every training target and whose column $j$ gives the
effect of one training target on every prediction. At a fitted model,
implicit differentiation of the stationarity condition splits $J$ into
three standard factors,
\begin{equation}
J
=
\underbrace{G}_{\parbox{2.5cm}{\centering
predictions\\ $\longleftarrow$\\ parameters}}
\underbrace{(-H^{-1})}_{\parbox{2.7cm}{\centering
parameters\\ $\longleftarrow$\\ loss gradient}}
\underbrace{C}_{\parbox{2.5cm}{\centering
loss gradient\\ $\longleftarrow$\\ targets}} ,
\label{eq:jfactorisation_pipeline}
\end{equation}
where $G$ stacks the gradients of the predictions with respect to the
parameters, $H$ is the Hessian of the training objective at the fitted
parameters, and column $j$ of $C$ is the derivative of the
training-loss gradient with respect to $y_j$. Each factor is available
matrix-free in any model trained by gradient methods: $G$ costs a
Jacobian-vector product, $H^{-1}$ an iterative solve against
Hessian-vector products, and each column of $C$ one mixed gradient. $J$
can therefore be computed for most models trained by gradient methods,
and wherever influence functions can be computed. Throughout, a
\emph{target change} $\Delta\by$ is a change in the training targets,
and a \emph{shift} is the resulting change in a prediction, either
after refitting or as predicted by $J$ or influence. A
\emph{perturbation} is a randomly drawn target change.

Sample-weight influence \citep{koh2017understanding} runs the same
pipeline with a different last factor. It uses $B$, the matrix of
per-example loss gradients, in place of $C$, so that
$\mathcal{I}=-GH^{-1}B$. For a pointwise scalar loss, column $j$ of $B$
is column $j$ of $C$ multiplied by a single number, and from this
observation we prove a new
identity: for canonical-link losses,%
\footnote{A canonical-link loss has the form
$\ell(f,y)=h(f)-yf+g(y)$, with $f$ the model output. Squared error,
logistic loss on logits and Poisson loss on log-rates are the common
cases (Corollary~\ref{cor:canonical_link}).}
sample-weight influence equals $J$
times a diagonal matrix of training residuals (the
\emph{canonical-link identity}),
\begin{equation}
  \mathcal{I} \;=\; J\,\diag(r),
  \label{eq:intro_canonical}
\end{equation}
exactly, at any stationary point of the training objective with an
invertible Hessian (Corollary~\ref{cor:canonical_link}).%
\footnote{Read in terms of targets, the identity says that upweighting
training point $j$ by $\epsilon$ has, to first order, the same effect
as moving its target by $\epsilon r_j$, away from the model's fit, and
that deleting the point corresponds to moving its target onto the fit.
Used as a target sensitivity, influence therefore predicts the shift
for the target change $\diag(r)\Delta\by$ rather than for $\Delta\by$.}
It follows that influence cannot stand in for $J$. As a model fits its
training data more closely, the residuals $r_j$ shrink, so
influence's predicted shift goes to zero even where $J$'s does not.

The canonical-link identity \eqref{eq:intro_canonical} is a special
case of a more general \emph{modulation identity},
\begin{equation}
  \mathcal{I} \;=\; J\,\diag(m),
  \qquad
  m_j \;=\; \ell_f / \ell_{fy},
  \label{eq:intro_identity}
\end{equation}
where $\ell_f$ and $\ell_{fy}$ are the first and mixed partial
derivatives of the per-example loss (Theorem~\ref{thm:scalar_mod}). For
canonical-link losses (squared error, logistic, Poisson) the multiplier
$m_j$ reduces to the residual $r_j$, which recovers
$\mathcal{I} = J\,\diag(r)$. Softmax cross-entropy obeys a multiclass
version with one residual per class
(Proposition~\ref{prop:multiclass}). In terms of the
factorisation \eqref{eq:jfactorisation_pipeline}, column $j$ of $B$ is
$m_j$ times column $j$ of $C$. For canonical-link losses, $J$ is
therefore influence with the residual factor removed. Column $j$ of $C$
is then $-1/N$ times the gradient of the model output at $x_j$, so if
an influence implementation uses these scaled output gradients in place
of the per-example loss gradients, it computes $J$. The identity also does
not require the exact Hessian: it holds unchanged for any shared
invertible surrogate, so the damped Gauss--Newton Hessian, which the
experiments also report, inherits it (Remark~\ref{cor:surrogate_mod}).

In the absence of $J$, influence-type methods are the closest available
proxy for applications that need target sensitivity, and they are used
that way: \citet{richardson2023addremove}, for example, rank training
points by their sample-weight influence on a fairness metric and then
flip the labels of the highest-ranked points. The modulation identity
implies three distinct ways in which influence fails in that role. Small
residuals shrink its predicted shift towards zero
(\emph{interpolation collapse}, Corollary~\ref{cor:interp_collapse});
residuals of mixed sign wash out its predicted direction
(\emph{direction washout}, Corollary~\ref{cor:sign_washout}); and under
softmax cross-entropy it collapses the per-class target sensitivities
of a training point onto a single direction, towards that training
point's own true class, whatever target change is asked about
(\emph{true-class collapse}, Corollary~\ref{cor:true_class_collapse}).

Figure~\ref{fig:california_map} shows what this means for a concrete
question: how would a house-price model's predictions change if the
top-coded values in California Housing had been recorded 20\% higher?
$J$ reproduces the refit, which concentrates the shift mainly in two
regions, whereas influence spreads a smaller shift along the coast and
predicts only a small part of the shift in the block groups that move
most
(Section~\ref{sec:exp_california}). None of this is a flaw in influence
as such. It correctly measures what it was built to measure, the
sensitivity of a prediction to per-example loss weights, but that
sensitivity is not target sensitivity, and the modulation identity
explains why.

\begin{figure}[t]
\centering
\includegraphics[width=\linewidth]{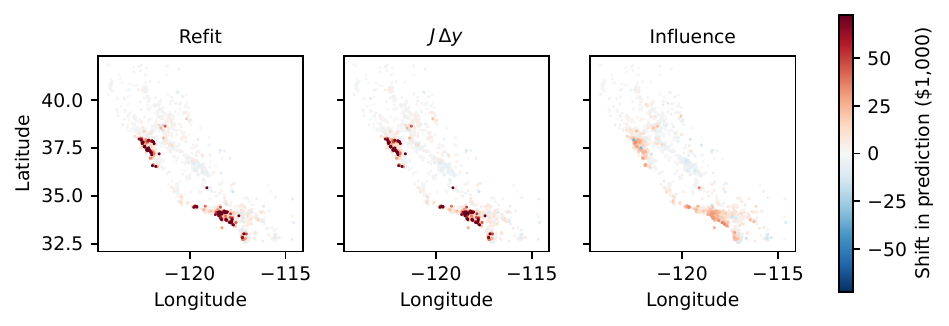}
\caption{What if the top-coded house values in California Housing had
been recorded 20\% higher? Each dot is one of 5,160 held-out block
groups, coloured by the shift in its prediction after the target change
(one colour scale for all panels). The refit (left)
concentrates the shift mainly around the San Francisco Bay Area and
Los Angeles, and $J\Delta\by$ (centre), computed without refitting,
reproduces it. Influence (right) spreads a smaller shift along the
coast and predicts only a small part of the shift in the block groups
that move most.
Section~\ref{sec:exp_california} gives the details.}
\label{fig:california_map}
\end{figure}

Testing $J$, and these predictions about influence, requires a metric
that matches what a target-sensitivity estimator claims, namely that the shift it predicts
in a prediction equals the shift that refitting produces. No standard
metric in the data-attribution literature tests that claim directly.
Pearson correlation tests direction; any affine-rescaled $R^2$ tests
direction after a free rescaling; and rank-correlation scores,
including the linear datamodelling score
\citep{ilyas2022datamodels,park2023trak}, test only ordering. All three
ignore scale and so cannot detect interpolation collapse, which acts on
scale. The
per-prediction coefficient of determination $R^2_i$, which is exactly
\verb|r2_score| from \verb|scikit-learn|, compares the predicted shift
with the refit shift directly, scale included. What
differs from textbook usage is what varies. There, $R^2$ is computed
across the instances of a dataset; here, $R^2_i$ is computed across
random target perturbations applied at a \emph{single} prediction $i$
(Section~\ref{sec:r2}). Its Mincer--Zarnowitz decomposition
\citep{mincer1969evaluation} into direction $\rho_i$, scale $\alpha_i$
and bias $\beta_i$ then makes each of the three failure modes visible.
Interpolation collapse lowers $\alpha_i$; direction washout drives
$\rho_i$ to zero; and true-class collapse lowers both. It lowers
$\rho_i$ because influence points towards the training point's own
true class whichever class its label is moved to, and it shrinks
$\alpha_i$ as the probability the classifier assigns to that training
point's true class approaches one.

The experiments test these predictions against refitting. On tanh networks
trained with squared error (Friedman\,\#1, $N=2{,}000$) and with
logistic loss (MAGIC, $N=14{,}265$, and HIGGS, $N=10^6$), $J$ attains a
per-prediction $R^2$ between 0.99 and 1.00, with direction and scale
near one, whereas influence's $R^2$ is negative on every dataset
(Section~\ref{sec:exp_tabular}, Table~\ref{tab:r2_mlp}). Each predicted
failure appears where the identity predicts it. Influence's
direction is washed out on all three datasets, with $\rho$ between
$-0.08$ and $0.03$, and on the two logistic datasets its scale is
shrunk ($\alpha$ of 0.30 on MAGIC and 0.47 on HIGGS).

On a softmax classifier for MNIST, with $N=2{,}000$ as on
Friedman,
influence's $R^2$ is negative ($-0.20$), with a weak direction
($\rho$ of $-0.37$) and a shrunk scale ($\alpha$ of 0.10), as
true-class collapse predicts, while $J$ reaches $R^2$ of 0.92
(Section~\ref{sec:exp_mnist},
Table~\ref{tab:r2_mlp}).

As an application, Section~\ref{sec:exp_california} asks how the
model's predictions would change if the top-coded house values in
California Housing had been recorded higher
(Figure~\ref{fig:california_map}), and checks both answers against a
refit. For increases from 5\% to 50\%, $J$'s $R^2$ lies between 0.98 and 1.00,
while influence's is at most 0.27 (Section~\ref{sec:exp_california},
Table~\ref{tab:california}).

The implementation never forms $J$. On HIGGS, with $N=10^6$ training points,
one row of $J$, which gives one prediction's sensitivities to every
training target, takes under 0.2 seconds on a single RTX~3090 after a
one-time setup of under five minutes, whereas
producing the same row by refitting would take $10^6$ refits, one per
training target (Appendix~\ref{app:timing}).

\section{Related work}
\label{sec:related}

\begin{table}[t]
\centering
\small
\caption{Training-point attribution operators, grouped by what is
differentiated. Panel~A collects operators of sensitivity to targets,
through which predictions are linear in the targets: exactly for the
classical smoothers and AXIL, and to first order for general learners.
Panel~B covers sensitivity to sample weights. The modulation identity
links the two panels.}
\label{tab:lineage}
\setlength{\tabcolsep}{4pt}
\begin{tabular}{@{}llll@{}}
\toprule
\multicolumn{4}{c}{\textbf{Panel A.  Sensitivity to targets}} \\
\midrule
Object & Identity & Source & Literature \\
\midrule
Linear regression                  & $\hat\by = K_{\mathrm{OLS}}\,\by$ exactly     & \citet{hoaglin1978hat}     & Statistics \\
Kernel ridge regression            & $\hat\by = K_{\mathrm{KRR}}\,\by$ exactly     & \citet{saunders1998ridge}  & Smoothers \\
L2 gradient boosting, fixed trees  & $\hat\by = K_{\mathrm{AXIL}}\,\by$ exactly    & \citet{geertsema2026axil}  & XAI \\
\textbf{Target-response Jacobian}  & $\hat\by \approx J\,\by + a^\star$ locally  & Eq.~\eqref{eq:local-identity} & TDA \\
\midrule
\multicolumn{4}{c}{\textbf{Panel B.  Sensitivity to sample weights}} \\
\midrule
Sample-weight influence $\mathcal{I}$  & $\partial \hat y_i/\partial w_j$ at $\bw=\mathbf{1}$ & \citet{koh2017understanding} & TDA \\
Canonical TRAK                         & projected, ensembled $\partial \hat y_i/\partial w_j$ & \citet{park2023trak} & TDA \\
\textbf{Modulation identity}           & $\mathcal{I} = J\,\diag(m)$                 & Thm.~\ref{thm:scalar_mod} & TDA \\
\bottomrule
\end{tabular}
\end{table}

Panel~A of Table~\ref{tab:lineage} reorganises three earlier
literatures, in statistics, smoothers and XAI, around a single
property: that predictions are linear in the training targets.
The OLS hat matrix \citep{hoaglin1978hat} and the kernel-ridge smoother
\citep{saunders1998ridge} have this property exactly, since
$\hat\by = K_{\mathrm{OLS}}\,\by$ and $\hat\by = K_{\mathrm{KRR}}\,\by$.
Regression diagnostics combine two quantities, leverage (the
diagonal of the hat matrix) and residuals. For OLS, where
$J=K_{\mathrm{OLS}}$, \citet{cook1982residuals} write the effect on
prediction $i$ of deleting training point $j$ as
$(K_{\mathrm{OLS}})_{ij}\,r_j/(1-(K_{\mathrm{OLS}})_{jj})$: a column of
$J$ scaled by the residual, with a leverage correction for removing the
point entirely. Deletion is a change in sample weight, so this is the
canonical-link identity for one learner, in finite form.
\citet{pregibon1981logistic} extended these diagnostics to
logistic regression, and \citet{stlaurent1992leverage,stlaurent1993leverage}
to nonlinear models. The closest existing object is the generalized
leverage of \citet{wei1998generalized}, defined as
$\partial\hat\by/\partial\by^\top$ for parametric models fitted by
maximum likelihood. It is $J$ with the evaluation points fixed at the
training inputs. In explainable AI (XAI), AXIL
\citep{geertsema2026axil} extends the smoother identity to fixed-tree
squared-error gradient boosting as $\hat\by = K_{\mathrm{AXIL}}\,\by$.
It also states the increment form
$F(\by^\star+\bdelta) = F(\by^\star) + J\bdelta + o(\|\bdelta\|)$ for
general differentiable learners, but leaves open how accurate that form
is and how to compute it at scale. The $J$ of this paper is the level
form $\hat\by \approx J\,\by + a^\star$ \eqref{eq:local-identity},
which places $J$ alongside the smoothers as the local first-order
operator.

Panel~B of Table~\ref{tab:lineage} covers the other axis, sensitivity
to sample weights. Sample-weight influence, which goes back to the
influence curve of robust statistics \citep{hampel1974influence} and
was brought to modern machine learning by \citet{koh2017understanding},
and canonical TRAK \citep{park2023trak} both compute
$\partial \hat y_i/\partial w_j$.%
\footnote{Other widely used methods also measure sensitivity to sample
weights, and those built on per-example loss gradients contain the
residual factor: for canonical-link losses the representer weight of
training point $j$ \citep{yeh2018representer} is proportional to $r_j$,
and each checkpoint term of TracIn \citep{pruthi2020tracin} contains the
residual of point $j$ at that checkpoint. Datamodels
\citep{ilyas2022datamodels} regress the output on subset inclusion, a
0/1 sample weight.}
The work that comes closest to target sensitivity is on labels.
\citet{koh2017understanding} remark that analogous equations apply to
label changes. \citet{kong2022resolving} derive how relabelling a
training point changes a test loss, and use it to choose training
points to relabel. \citet{adebayo2023quantifying} derive the effect of
label changes on a group fairness metric, and use it to find the label
errors that most affect it. Other papers study particular label changes.
\citet{gong2022partial} compute the parameter change for one candidate
label replacement in partial-label learning,
\citet{zhang2022understanding} measure the test-loss effect of
soft-label changes in programmatic weak supervision, and
\citet{warnecke2023machine} give a closed-form parameter update that
unlearns a specified set of label changes. Each of these papers
computes the effect of one kind of change on one downstream quantity.
$J$, by contrast, gives the first-order response of every prediction to
every training target, and we check it against refits. Through the
modulation identity, $J$ also links the two panels of
Table~\ref{tab:lineage}, and so connects literatures that have
developed separately. The Jacobian itself is standard mathematics. To
our knowledge, however, no prior work states $J$ as the target-response
operator for general learners at evaluation points, computes it at
scale, checks it against refits, or relates it exactly to influence.

\section{The target-response Jacobian and the modulation identity}
\label{sec:identity}

This section defines $J$ and sample-weight influence for a learner
trained to a stationary point, and proves that for pointwise scalar
losses the two operators differ only by a per-example factor, with a
per-class analogue for softmax cross-entropy. Table~\ref{tab:notation}
in Appendix~\ref{app:notation} collects the notation.

\paragraph{Setup.}
Let $X=(x_1,\dots,x_N)$ be training inputs with targets
$\by=(y_1,\dots,y_N)\in\R^N$ and let $Q=(z_1,\dots,z_S)$ be an
evaluation set. Throughout, $i\in[S]$ indexes evaluation predictions
and $j\in[N]$ indexes training points, where $[N]:=\{1,\dots,N\}$.
$\|\cdot\|$ is the Euclidean norm. The learner has parameters
$\btheta\in\R^P$ and minimises
\begin{equation}
  \mathcal{L}(\btheta;\by)
  = \frac{1}{N}\sum_{j=1}^{N}\ell_j(\btheta,y_j) + R(\btheta),
\end{equation}
with each $\ell_j$ twice continuously differentiable in
$(\btheta,y_j)$ and $R$ twice continuously differentiable in $\btheta$
near $\btheta^\star$. Write $\by^\star$ for the observed targets and
$\btheta^\star$ for the fitted parameters. Assume $\btheta^\star$ is an
interior stationary point of $\mathcal{L}(\,\cdot\,;\by^\star)$ at
which the Hessian
$H:=\nabla^2_{\btheta\btheta}\mathcal{L}(\btheta^\star;\by^\star)$ is
invertible. The implicit function theorem then gives a unique
differentiable map $\btheta^\star(\by)$ near $\by^\star$ with
$\nabla_{\btheta}\mathcal{L}(\btheta^\star(\by);\by)=0$ and
$\btheta^\star(\by^\star)=\btheta^\star$. Below, ``trained model'' means
this locally defined solution $\btheta^\star(\by)$, which need not be
a global minimiser.

\begin{definition}[Target-response map]
\label{def:target_response}
The \emph{target-response map} $F:\R^N\to\R^S$ is
\begin{equation}
  F(\by) := \bigl(q_1(\btheta^\star(\by)),\dots,q_S(\btheta^\star(\by))\bigr)^\top,
\end{equation}
where $q_i(\btheta)$ is a differentiable scalar prediction at
evaluation point $z_i$, such as a logit or a predicted probability.
\end{definition}

\begin{definition}[Target-response Jacobian]
\label{def:target_jacobian}
The \emph{target-response Jacobian} is
$J := \dfrac{\partial F}{\partial\by}(\by^\star)\in\R^{S\times N}$.
\end{definition}

Entry $J_{ij}$ is the first-order change in prediction $i$ when
training target $j$ moves by one unit,
\begin{equation}
  F(\by^\star+\bdelta) \;=\; F(\by^\star) + J\bdelta + o(\|\bdelta\|)
  \qquad\text{as }\bdelta\to 0.
  \label{eq:taylor_F}
\end{equation}
Writing $\hat\by:=F(\by)$ for the predictions and
$a^\star:=F(\by^\star)-J\by^\star$, the expansion reads
\begin{equation}
  \hat\by \;\approx\; J\,\by + a^\star,
  \label{eq:local-identity}
\end{equation}
the local version of the smoother identity $\hat\by=K\by$ in
Table~\ref{tab:lineage}. When $F(\by)=K\by$ exactly, $J=K$ and
$a^\star=0$. Vector targets $y_j\in\R^{K_c}$, such as one-hot or soft
labels over $K_c$ classes, make $J$ a 3-tensor in
$\R^{S\times N\times K_c}$ (Proposition~\ref{prop:multiclass}).

The attribution literature uses a different per-example sensitivity,
the sample-weight influence of \citet{koh2017understanding}
(Appendix~\ref{app:notation} relates its sign and scale to other
conventions). To define it, give each example a weight,
$\mathcal{L}(\btheta;\by,\bw)=(1/N)\sum_j w_j\ell_j + R(\btheta)$, let
$F(\by,\bw):=(q_i(\btheta^\star(\by,\bw)))_{i=1}^{S}$, so that
$F(\by)=F(\by,\mathbf{1})$, and set
\begin{equation}
  \mathcal{I} \;:=\; \frac{\partial F}{\partial\bw}(\by^\star,\mathbf{1})
  \;\in\;\R^{S\times N}.
\end{equation}

The identity below holds for pointwise scalar losses
$\ell_j(\btheta,y_j)=\ell\bigl(f_{\btheta}(x_j),y_j\bigr)$ with
$f_{\btheta}(x_j)\in\R$ and $\ell$ twice continuously differentiable in
$(f,y)$. Write $f_j:=f_{\btheta^\star}(x_j)$ and let $\ell_f$ and
$\ell_{fy}$ be the partials of $\ell$ at $(f_j,y_j^\star)$. The
\emph{modulation factor} is
\begin{equation}
  m_j \;:=\; \ell_f(f_j,y_j^\star)\,/\,\ell_{fy}(f_j,y_j^\star),
  \qquad j\in[N],
  \label{eq:mu_def}
\end{equation}
which requires $\ell_{fy}(f_j,y_j^\star)\neq 0$ at every training point.

\begin{theorem}[Modulation identity]
\label{thm:scalar_mod}
Under the conditions above, with $\ell_{fy}(f_j,y_j^\star)\neq 0$ for
every $j\in[N]$, exactly at the trained model $\btheta^\star$,
\begin{equation}
  \mathcal{I} \;=\; J\,\diag(m).
  \label{eq:modulation}
\end{equation}
\end{theorem}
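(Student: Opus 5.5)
The plan is to differentiate the stationarity condition implicitly twice, once in $\by$ and once in $\bw$, and then compare the two right-hand sides column by column. Let $\Phi(\btheta,\by,\bw):=\nabla_{\btheta}\mathcal{L}(\btheta;\by,\bw)=\frac1N\sum_j w_j\nabla_{\btheta}\ell_j(\btheta,y_j)+\nabla R(\btheta)$. At $(\btheta^\star,\by^\star,\mathbf{1})$ we have $\partial_{\btheta}\Phi=H$, which is invertible by assumption. The implicit function theorem therefore gives a jointly $C^1$ solution $\btheta^\star(\by,\bw)$ near $(\by^\star,\mathbf{1})$. By uniqueness this solution agrees with $\btheta^\star(\by)$ when $\bw=\mathbf{1}$, so both $J$ and $\mathcal{I}$ are derivatives of one map. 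Differentiating $\Phi(\btheta^\star(\by,\bw),\by,\bw)=0$ gives
\[
\frac{\partial\btheta^\star}{\partial y_j}=-H^{-1}C_{:j},\qquad
\frac{\partial\btheta^\star}{\partial w_j}=-H^{-1}B_{:j},
\]
with $C_{:j}=\partial_{y_j}\Phi=\frac1N\nabla_{\btheta}\partial_{y}\ell_j(\btheta^\star,y_j^\star)$ and $B_{:j}=\partial_{w_j}\Phi=\frac1N\nabla_{\btheta}\ell_j(\btheta^\star,y_j^\star)$. Composing with the predictions via the chain rule, and writing $G$ for the matrix whose rows are $\nabla_{\btheta}q_i(\btheta^\star)^\top$, yields $J=-GH^{-1}C$ and $\mathcal{I}=-GH^{-1}B$. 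This matches the factorisation \eqref{eq:jfactorisation_pipeline}.

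Next I would use the pointwise form $\ell_j(\btheta,y)=\ell(f_{\btheta}(x_j),y)$ and apply the chain rule. This gives $\nabla_{\btheta}\ell_j=\ell_f(f_j,y_j^\star)\,\nabla_{\btheta}f_{\btheta^\star}(x_j)$. Differentiating in $y$ first and then in $\btheta$, which is legitimate because $\ell$ is $C^2$ and the mixed partials commute, gives $\nabla_{\btheta}\partial_y\ell_j=\ell_{fy}(f_j,y_j^\star)\,\nabla_{\btheta}f_{\btheta^\star}(x_j)$. Both columns are therefore multiples of the same vector $\frac1N\nabla_{\btheta}f_{\btheta^\star}(x_j)$. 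Since $\ell_{fy}\neq0$, we get $B_{:j}=m_j\,C_{:j}$, that is, $B=C\,\diag(m)$. It follows that $\mathcal{I}=-GH^{-1}C\,\diag(m)=J\,\diag(m)$.

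The only step that needs real care is the first one. We must justify that the $\by$-derivative and the $\bw$-derivative are taken of the same locally defined solution branch, so that the same $H$ appears in both formulas. This is where the joint implicit function theorem in $(\by,\bw)$ and the uniqueness clause are used. Note also that the regulariser $R$ does not depend on $y$ or $w$, so it contributes only to $H$. The factorisation of $B$ and $C$ is routine. The identity is exact rather than approximate because no Taylor remainder is involved: both sides are exact derivatives at $\btheta^\star$.
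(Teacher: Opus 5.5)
Your proposal is correct and follows the paper's proof essentially step for step. You implicitly differentiate stationarity in $y_j$ and in $w_j$ to get $J=-GH^{-1}C$ and $\mathcal{I}=-GH^{-1}B$, then use the per-example chain rule to obtain $B=C\,\diag(m)$. Your added remark, that both derivatives come from one jointly defined branch $\btheta^\star(\by,\bw)$ by the implicit function theorem, makes explicit a point the paper leaves implicit.
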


\begin{proof}
\emph{Step 1 (implicit differentiation).} Let
$g_i := \nabla_{\btheta} q_i(\btheta^\star)$ and
$G := [g_1\cdots g_S]^\top \in \R^{S\times P}$. Differentiating the
stationarity condition $\nabla_{\btheta}\mathcal{L}(\btheta^\star(\by);\by)=0$
in $y_j$ gives
\[
  H\,\frac{\partial\btheta^\star}{\partial y_j} + c_j \;=\; 0,
  \qquad
  c_j := \nabla_{\btheta}(\partial\mathcal{L}/\partial y_j)\big|_{(\btheta^\star,\by^\star)}
       = (1/N)\nabla_{\btheta}(\partial\ell_j/\partial y_j),
\]
so $\partial\btheta^\star/\partial y_j = -H^{-1}c_j$ and
$\partial q_i/\partial y_j = -g_i^\top H^{-1}c_j$. With
$C := [c_1\cdots c_N] \in \R^{P\times N}$,
\begin{equation}
  J \;=\; -\,G\,H^{-1}\,C.
  \label{eq:jfactorisation}
\end{equation}
Differentiating instead in $w_j$ at $\bw=\mathbf{1}$ gives
$H\,\partial\btheta^\star/\partial w_j + b_j = 0$ with
$b_j := (1/N)\nabla_{\btheta}\ell_j(\btheta^\star,y_j^\star)$, so with
$B := [b_1\cdots b_N] \in \R^{P\times N}$,
\begin{equation}
  \mathcal{I} \;=\; -\,G\,H^{-1}\,B.
  \label{eq:influence_factorisation}
\end{equation}

\emph{Step 2 (per-example chain rule).} Write
$\psi_j := \nabla_{\btheta}f_{\btheta^\star}(x_j)$. The chain rule on
$\ell(f_{\btheta}(x_j),y_j)$ gives
$\nabla_{\btheta}\ell_j = \ell_f\,\psi_j$ and
$\nabla_{\btheta}(\partial\ell_j/\partial y_j) = \ell_{fy}\,\psi_j$, so
$b_j = (1/N)\ell_f\,\psi_j$ and $c_j=(1/N)\ell_{fy}\,\psi_j$. Both are
multiples of $\psi_j$, and because $\ell_{fy}\neq 0$, $b_j=m_j c_j$
with $m_j=\ell_f/\ell_{fy}$. Stacking the columns over $j$ gives
\begin{equation}
  B \;=\; C\diag(m).
  \label{eq:B_Cmu}
\end{equation}

\emph{Step 3 (combine).} Substituting \eqref{eq:B_Cmu} into
\eqref{eq:influence_factorisation} and using \eqref{eq:jfactorisation},
\begin{equation}
  \mathcal{I} \;=\; -GH^{-1}C\,\diag(m) \;=\; J\,\diag(m),
  \label{eq:modulation_proof}
\end{equation}
as required.
\end{proof}

Equation~\eqref{eq:jfactorisation} is the pipeline
\eqref{eq:jfactorisation_pipeline} of the introduction, and influence
runs the same pipeline with $B$ in place of $C$.

\begin{corollary}[Canonical-link losses]
\label{cor:canonical_link}
Suppose $\ell$ has the canonical exponential-family form
$\ell(f,y) = h(f) - y\,f + g(y)$, with $h$ and $g$ twice continuously
differentiable. Then $\ell_f=h'(f)-y$ and $\ell_{fy}=-1$, so the
hypothesis of Theorem~\ref{thm:scalar_mod} holds for any fitted model,
\begin{equation}
  m_j \;=\; y_j^\star - h'(f_j) \;=:\; r_j
\end{equation}
is the training residual, and $\mathcal{I}=J\,\diag(r)$.
\end{corollary}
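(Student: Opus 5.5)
This is a direct specialisation of Theorem~\ref{thm:scalar_mod}, so the plan is to verify its hypotheses for the canonical form and then evaluate the modulation factor \eqref{eq:mu_def}.

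First, differentiate $\ell(f,y)=h(f)-yf+g(y)$. The partial in $f$ is $\ell_f(f,y)=h'(f)-y$. Differentiating that in $y$ gives $\ell_{fy}(f,y)=-1$, a constant. Because $h$ and $g$ are $C^2$, $\ell$ is twice continuously differentiable in $(f,y)$. This places the loss in the pointwise scalar class covered by Theorem~\ref{thm:scalar_mod}. The standing assumptions of the Setup (interior stationary point, invertible $H$, $C^2$ regularity of $f_{\btheta}$ and $R$) are inherited unchanged.

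Second, check the extra hypothesis $\ell_{fy}(f_j,y_j^\star)\neq0$. Since $\ell_{fy}\equiv-1$, it holds at every training point and for any fitted model. This is why the corollary can say ``for any fitted model'': unlike a general loss, nothing about the values $f_j$ or $y_j^\star$ can make $m_j$ undefined.

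Third, compute the multiplier: $m_j=\ell_f(f_j,y_j^\star)/\ell_{fy}(f_j,y_j^\star)=(h'(f_j)-y_j^\star)/(-1)=y_j^\star-h'(f_j)$. This is $r_j$, the target minus the mean prediction $h'(f_j)$ under the canonical link. Substituting into \eqref{eq:modulation} gives $\mathcal{I}=J\diag(r)$.

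As an optional sanity check, I would note the named cases:
\begin{itemize}
\item squared error $\tfrac12(f-y)^2$ has $h(f)=f^2/2$, so $h'(f)=f$;
\item logistic loss on logits has $h(f)=\log(1+e^f)$, so $h'$ is the sigmoid;
\item Poisson loss on log-rates has $h(f)=e^f$.
\end{itemize}
In each case $r_j$ is the usual residual.

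There is no real obstacle here. The only point needing care is the sign convention: the minus sign in $\ell_{fy}=-1$ is what turns $h'(f)-y$ into $y-h'(f)$, so the residual comes out as target minus fit.
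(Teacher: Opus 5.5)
Your proposal is correct and follows the paper's own argument: compute $\ell_f=h'(f)-y$ and $\ell_{fy}\equiv-1$, observe that the nonvanishing hypothesis of Theorem~\ref{thm:scalar_mod} therefore holds for any fitted model, and evaluate $m_j=(h'(f_j)-y_j^\star)/(-1)=y_j^\star-h'(f_j)=r_j$ before substituting into \eqref{eq:modulation}. Your check of the squared-error, logistic and Poisson cases matches the paper's remark after the corollary.
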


The residual is $y_j-f_j$ for squared error, $y_j-\sigma(f_j)$ for
logistic loss and $y_j-e^{f_j}$ for Poisson regression. Influence is
therefore zero for a point the model fits exactly, whatever that
point's effect on the predictions when its target moves.

\begin{remark}[Surrogate Hessians]
\label{cor:surrogate_mod}
Step~2 of the proof does not involve $H$. The identity therefore also
holds for $\hat{\mathcal I}:=-G\hat H^{-1}B$ and $\hat J:=-G\hat H^{-1}C$
with any invertible $\hat H$ used for both, $\hat{\mathcal I}=\hat J\,\diag(m)$.
\end{remark}

Theorem~\ref{thm:scalar_mod} needs a scalar model output and a scalar
target. For softmax
cross-entropy the output is a vector of $K_c$ logits,
$f_{\btheta}:\R^d\to\R^{K_c}$, with probabilities
$p_k(\btheta,x):=e^{f_k}/\sum_{k'}e^{f_{k'}}$ and per-example loss
$\ell_j(\btheta,y_j)=\log\sum_{k'}e^{f_{k'}(\btheta,x_j)}-y_j^\top f(\btheta,x_j)$
on soft or one-hot targets with $\sum_k y_{jk}=1$, and the identity
takes a tensor form, with one residual per class.

\begin{proposition}[Softmax cross-entropy]
\label{prop:multiclass}
Let $r_{jk}:=y_{jk}-p_k(\btheta^\star,x_j)$ and let
$J\in\R^{S\times N\times K_c}$ with
$J_{i,j,k}:=\partial F_i(\by^\star)/\partial y_{jk}$. Then
\begin{equation}
  \mathcal{I}_{:,j} \;=\; \sum_{k=1}^{K_c} r_{jk}\, J_{:,j,k}
  \;\in\;\R^S.
  \label{eq:multiclass}
\end{equation}
The same holds when both operators share any invertible $\hat H$.
\end{proposition}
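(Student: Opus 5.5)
I will follow the three-step structure of the proof of Theorem~\ref{thm:scalar_mod}, replacing the scalar chain rule in Step~2 with a per-class version. The target of training point $j$ is now a vector $y_j\in\R^{K_c}$, so $J$ acquires a class index. Step~1 goes through verbatim coordinate by coordinate. Differentiating the stationarity condition in $y_{jk}$ gives
\[
J_{:,j,k} = -GH^{-1}c_{jk}, \qquad c_{jk} := (1/N)\nabla_{\btheta}(\partial\ell_j/\partial y_{jk}).
\]
Differentiating in $w_j$ gives $\mathcal{I}_{:,j} = -GH^{-1}b_j$ with $b_j := (1/N)\nabla_{\btheta}\ell_j$. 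Neither computation uses the form of $\ell_j$, only the invertibility of $H$ and the implicit function theorem as set up above.

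For Step~2, write $\psi_{jk}:=\nabla_{\btheta}f_k(\btheta^\star,x_j)$. From $\ell_j=\log\sum_{k'}e^{f_{k'}}-y_j^\top f$ we get $\partial\ell_j/\partial y_{jk}=-f_k(\btheta,x_j)$, so $c_{jk}=-(1/N)\psi_{jk}$. Meanwhile
\[
\nabla_{\btheta}\ell_j=\sum_k\bigl(p_k-y_{jk}\bigr)\psi_{jk}=-\sum_k r_{jk}\psi_{jk},
\]
so $b_j=-(1/N)\sum_k r_{jk}\psi_{jk}=\sum_k r_{jk}c_{jk}$. Step~3 is then linearity: apply $-GH^{-1}$ to this identity to obtain \eqref{eq:multiclass}. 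Because the per-class relation $b_j=\sum_k r_{jk}c_{jk}$ never involves $H$, the same substitution works for any invertible $\hat H$ shared by both operators, exactly as in Remark~\ref{cor:surrogate_mod}.

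The main obstacle I expect is the constraint $\sum_k y_{jk}=1$. If targets are restricted to the simplex, then $\partial F/\partial y_{jk}$ is not a free partial derivative. I would treat $\ell_j$ as defined for unconstrained $y_j\in\R^{K_c}$, which the formula allows and which is what $J_{i,j,k}$ denotes. Moving off the simplex changes the Hessian only through the $y$-dependence of the stationarity condition, and that is already accounted for. I would then note that the identity is unaffected by this choice: adding a common constant $t$ to every $r_{jk}$ adds $t\sum_k J_{:,j,k}$. That sum vanishes, because increasing all $y_{jk}$ by $t$ changes $\ell_j$ by $-t\sum_k f_k$, which shifts the logits uniformly. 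In the simplest argument, though, no such reparametrisation is needed, since Step~2 is a pure identity on gradients.
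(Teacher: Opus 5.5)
Your Steps 1--3 are correct and coincide with the paper's proof. You use the same chain: $c_{jk}=-(1/N)\psi_{jk}$, then $b_j=\sum_k r_{jk}c_{jk}$, then linearity of $-GH^{-1}$ (or $-G\hat H^{-1}$). Like the paper, you read $J_{:,j,k}$ as unconstrained partials.

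Your side remark on the simplex constraint, however, is false and should be removed or reversed. In general
\[
\sum_k J_{:,j,k}=(1/N)\,GH^{-1}\nabla_{\btheta}\sum_k f_k(\btheta^\star,x_j)\neq 0 .
\]
The reason is that the softmax loss is invariant to a uniform shift of the logits only when $\sum_k y_{jk}=1$. Adding $t$ to every $y_{jk}$ adds $-t\sum_k f_k$ to $\ell_j$, and the $\btheta$-gradient of that term does not vanish; for a final linear layer with biases, its derivative in each output bias is $1$.

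Take the bias-only model $f_k=b_k$ with penalty $\tfrac{\lambda}{2}\|b\|^2$. There, adding $t$ to every $y_{jk}$ raises every logit by exactly $t/(N\lambda)$, so for a logit prediction $\sum_k J_{i,j,k}=1/(N\lambda)$. Once there are hidden layers, the response is not even a uniform logit shift.

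The correct observation runs the other way, and it is the one the paper makes. The residuals are fixed at $\by^\star$ and satisfy $\sum_k r_{jk}=\sum_k y_{jk}-\sum_k p_{j,k}=0$. So any component of $J_{:,j,k}$ common to all classes $k$ drops out of $\sum_k r_{jk}J_{:,j,k}$. That common component is exactly what a derivative restricted to the simplex would leave undetermined.

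Since you say the aside is not needed, the proof itself stands.
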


\begin{proof}
Appendix~\ref{app:multiclass}.
\end{proof}

\section{Implications for influence}
\label{sec:failures}

Suppose influence is used to predict how predictions change when
training targets change. The modulation identity then predicts three
ways in which influence fails, even though it still measures
sensitivity to sample weights correctly.

The first failure concerns scale. When the modulation factors shrink,
as the residuals do when a model fits its training data more closely,
influence shrinks with them even while $J$ stays bounded.

\begin{corollary}[Interpolation collapse]
\label{cor:interp_collapse}
Under Theorem~\ref{thm:scalar_mod}, with $N$ and $S$ fixed, suppose
that along a sequence of fitted models $\max_j|m_j|\to 0$ while the
entries of $J$ stay bounded. Then every entry of $\mathcal{I}$ goes to
zero, and so does influence's predicted shift $\mathcal{I}\Delta\by$
for every bounded perturbation $\Delta\by$. As a target-sensitivity
estimator, influence collapses to predicting no change. The same holds
for any shared $\hat H$, with the bound imposed on the entries of
$\hat J$.
\end{corollary}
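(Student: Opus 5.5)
The plan is to apply Theorem~\ref{thm:scalar_mod} entrywise along the sequence and then finish with an elementary bound. Index the sequence of fitted models by $n$, so that we have $J^{(n)}$, $m^{(n)}$ and $\mathcal{I}^{(n)}$. By hypothesis each fitted model meets the conditions of the theorem, namely an invertible Hessian and $\ell_{fy}\neq 0$ at every training point. The modulation identity \eqref{eq:modulation} then holds exactly at each $n$, and it reads entrywise as $\mathcal{I}^{(n)}_{ij}=J^{(n)}_{ij}\,m^{(n)}_j$.

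\textbf{Entrywise collapse.} Let $M:=\sup_n\max_{i,j}|J^{(n)}_{ij}|$, which is finite by the boundedness assumption. Then
\[
|\mathcal{I}^{(n)}_{ij}|\le M\,\max_{j'}|m^{(n)}_{j'}|\to 0,
\]
so every entry of $\mathcal{I}^{(n)}$ tends to zero.

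\textbf{Predicted shift.} Let $\Delta\by$ be a perturbation with $\|\Delta\by\|\le D$; this may vary with $n$ provided it stays bounded. Since $N$ and $S$ are fixed, Cauchy--Schwarz applied row by row gives
\[
|(\mathcal{I}^{(n)}\Delta\by)_i|\le \sqrt{N}\,M\,\max_j|m^{(n)}_j|\,D\to 0
\]
uniformly in $i$. Hence $\|\mathcal{I}^{(n)}\Delta\by\|\le\sqrt{S N}\,M D\max_j|m^{(n)}_j|\to 0$. Fixing $N$ and $S$ is what keeps the dimension-dependent constants bounded. As a target-sensitivity estimator, influence therefore predicts no change in the limit, while $J^{(n)}\Delta\by$ need not vanish.

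\textbf{Surrogate Hessians.} For a shared invertible $\hat H$, Remark~\ref{cor:surrogate_mod} gives $\hat{\mathcal I}=\hat J\,\diag(m)$, since Step~2 of the proof never involves $H$. The same two bounds then go through with the entries of $\hat J$ bounded in place of those of $J$.

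No step is a real obstacle. The only care needed is to check that the hypotheses of the theorem hold along the whole sequence, so that the identity is available at every $n$, and to make the boundedness of $J$ uniform in $n$.
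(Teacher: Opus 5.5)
Your proposal is correct and takes essentially the same approach as the paper. Both apply the modulation identity entrywise, $|\mathcal{I}_{ij}|=|m_j|\,|J_{ij}|\le\max_l|m_l|\,|J_{ij}|$, and both treat each entry of $\mathcal{I}\Delta\by$ as a sum of $N$ such terms times bounded factors; your uniform constant $M=\sup_n\max_{i,j}|J^{(n)}_{ij}|$, the Cauchy--Schwarz bound, and the surrogate case via Remark~\ref{cor:surrogate_mod} only make the paper's one-line argument explicit.
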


\begin{proof}
$|\mathcal{I}_{ij}|=|m_j|\,|J_{ij}|\le\max_l|m_l|\,|J_{ij}|\to 0$, and
$(\mathcal{I}\Delta\by)_i=\sum_j\mathcal{I}_{ij}\Delta y_j$ is a sum of
$N$ such entries times bounded factors.
\end{proof}

Kernel ridge regression shows the collapse in closed form. As the ridge
parameter goes to zero the model interpolates, the residuals vanish and
influence goes to zero, while $J$ converges to the interpolating
smoother (Appendix~\ref{app:krr}).

The second failure concerns direction. Draw a perturbation by choosing
a training point $U$ uniformly from $[N]$ and moving its target by a
random amount $\delta$, $\Delta\by=\delta\,e_U$. At prediction $i$,
influence then predicts the shift $r_U J_{iU}\delta$ and $J$ predicts
$J_{iU}\delta$.

\begin{corollary}[Direction washout]
\label{cor:sign_washout}
Under Corollary~\ref{cor:canonical_link}, fix a prediction $i$ with
$J_{i,:}\neq 0$, and let $\delta$ be independent of $U$ with
$\mathbb{E}[\delta]=0$ and $\mathbb{E}[\delta^2]>0$. Weight training
point $j$ by $\pi_{ij}:=J_{ij}^2/\sum_{l}J_{il}^2$, and let $\mu_i$ and
$\sigma_i^2$ be the mean and variance of the residuals under these
weights, with $\mu_i^2+\sigma_i^2>0$. Then the correlation between the
shifts predicted by influence and by $J$ is
\begin{equation}
  \operatorname{corr}\bigl(r_U J_{iU}\delta,\; J_{iU}\delta\bigr)
  \;=\; \frac{\mu_i}{\sqrt{\mu_i^2+\sigma_i^2}}.
  \label{eq:washout}
\end{equation}
It is zero when the weighted mean residual is zero, and close to zero
whenever that mean is small relative to the spread of the residuals.
\end{corollary}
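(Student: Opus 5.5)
Write $X := J_{iU}\delta$ for the shift predicted by $J$ and $Y := r_U J_{iU}\delta$ for the shift predicted by influence. By Corollary~\ref{cor:canonical_link}, $\mathcal{I}_{iU}=r_UJ_{iU}$, so $Y=\mathcal{I}_{iU}\delta$ is exactly influence's predicted shift for $\Delta\by=\delta e_U$. The plan is to compute the three second moments of $(X,Y)$ directly, conditioning on $U$ and using the independence of $\delta$ and $U$. No asymptotics are needed; the result is an exact finite computation.

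\emph{Means.} Since $\mathbb{E}[\delta]=0$ and $\delta$ is independent of $U$, both $\mathbb{E}[X]=\mathbb{E}[J_{iU}]\,\mathbb{E}[\delta]=0$ and $\mathbb{E}[Y]=0$. Covariance and variances therefore reduce to raw second moments.

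\emph{Second moments.} Write $s:=\mathbb{E}[\delta^2]>0$ and $T_i:=\sum_l J_{il}^2>0$, which is positive by $J_{i,:}\neq0$. With $U$ uniform on $[N]$,
\[
\mathbb{E}[XY]=\frac{s}{N}\sum_j r_jJ_{ij}^2=\frac{sT_i}{N}\mu_i,\qquad
\mathbb{E}[X^2]=\frac{sT_i}{N},\qquad
\mathbb{E}[Y^2]=\frac{s}{N}\sum_j r_j^2J_{ij}^2=\frac{sT_i}{N}(\mu_i^2+\sigma_i^2).
\]
The key observation is that the weights $J_{ij}^2$ appearing in every sum, once normalised by $T_i$, are exactly $\pi_{ij}$. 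Hence $\sum_j\pi_{ij}r_j=\mu_i$ and $\sum_j\pi_{ij}r_j^2=\mu_i^2+\sigma_i^2$.

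\emph{Conclusion.} Both variances are strictly positive: $\mathbb{E}[X^2]>0$ by $s,T_i>0$, and $\mathbb{E}[Y^2]>0$ by the assumption $\mu_i^2+\sigma_i^2>0$. The correlation is therefore well defined. Dividing, the common factor $sT_i/N$ cancels and leaves $\mu_i/\sqrt{\mu_i^2+\sigma_i^2}$, which is \eqref{eq:washout}.

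The two remarks follow by inspection. The correlation vanishes iff $\mu_i=0$. Its absolute value is at most $|\mu_i|/\sigma_i$, so it is small whenever $|\mu_i|\ll\sigma_i$.

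There is no real obstacle. The only step needing care is recognising the $\pi_{ij}$-weighted moments inside the sums, and checking the nondegeneracy conditions so that the correlation is well defined.
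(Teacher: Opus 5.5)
Your proof is correct and follows the paper's own argument. Both shifts have zero mean by $\mathbb{E}[\delta]=0$ and independence, each second moment factors as $\mathbb{E}[\delta^2]$ times a $J_{iU}^2$-weighted average over $U$, and normalising by $\mathbb{E}[J_{iU}^2]\,\mathbb{E}[\delta^2]$ gives the $\pi_{ij}$-weighted moments $\mu_i$ and $\mu_i^2+\sigma_i^2$; you also make explicit two points the paper leaves implicit, namely the uniform law of $U$ and the positivity of both variances.
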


\begin{proof}
Both shifts have mean zero because $\mathbb{E}[\delta]=0$ and
$\delta$ is independent of $U$. Their covariance is
$\mathbb{E}[r_UJ_{iU}^2]\,\mathbb{E}[\delta^2]$, and their variances are
$\mathbb{E}[r_U^2J_{iU}^2]\,\mathbb{E}[\delta^2]$ and
$\mathbb{E}[J_{iU}^2]\,\mathbb{E}[\delta^2]$. Dividing numerator and
denominator by $\mathbb{E}[J_{iU}^2]\,\mathbb{E}[\delta^2]>0$ turns the
expectations over $U$ into weighted means,
$\mathbb{E}_\pi[r]=\mu_i$ and $\mathbb{E}_\pi[r^2]=\mu_i^2+\sigma_i^2$.
\end{proof}

The third failure is specific to softmax cross-entropy.

\begin{corollary}[True-class collapse]
\label{cor:true_class_collapse}
Under softmax cross-entropy with one-hot label $y_j = e_{t_j}$, so that
$t_j$ is the recorded class of training point $j$,
Proposition~\ref{prop:multiclass} gives
\begin{equation}
  \mathcal{I}_{:,j}
  \;=\;
  (1-p_{j,t_j})\,J_{:,j,t_j}
  \;-\; \sum_{k\neq t_j} p_{j,k}\, J_{:,j,k}.
  \label{eq:true_class_decomp}
\end{equation}
For the label change $\Delta\by_j = \delta(e_a - e_{t_j})$ that moves
mass $\delta$ from the true class $t_j$ to a class $a\neq t_j$, and
with influence scored as $\delta\,\mathcal{I}_{i,j}$
(Appendix~\ref{app:multiclass}), the predicted shifts in prediction $i$
are
\begin{equation*}
  (J\Delta\by)_i \;=\; \delta\bigl(J_{i,j,a} - J_{i,j,t_j}\bigr),
  \qquad
  (\mathcal{I}\Delta\by)_i \;=\; \delta\,\mathcal{I}_{i,j}.
\end{equation*}
$J$'s shift depends on the destination class $a$, whereas influence's
does not.
\end{corollary}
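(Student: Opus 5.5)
The proof specialises Proposition~\ref{prop:multiclass} to a one-hot label and then reads off the two predicted shifts. For the decomposition \eqref{eq:true_class_decomp}, substitute $y_j=e_{t_j}$ into the residuals $r_{jk}=y_{jk}-p_{j,k}$, where $p_{j,k}:=p_k(\btheta^\star,x_j)$. This gives $r_{j,t_j}=1-p_{j,t_j}$ and $r_{jk}=-p_{j,k}$ for $k\neq t_j$. Inserting these into \eqref{eq:multiclass} and splitting the sum over $k$ into the term $k=t_j$ and the rest yields \eqref{eq:true_class_decomp} directly. Since Proposition~\ref{prop:multiclass} also holds for any shared invertible $\hat H$, the decomposition carries over to surrogate Hessians as well.

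For $J$'s shift, use the tensor definition $J_{i,j,k}=\partial F_i/\partial y_{jk}$, so that the first-order shift is $(J\Delta\by)_i=\sum_{l,k}J_{i,l,k}\Delta y_{lk}$. The perturbation $\Delta\by$ is nonzero only in row $j$, where it equals $\delta(e_a-e_{t_j})$. The sum therefore reduces to $\delta(J_{i,j,a}-J_{i,j,t_j})$. The perturbed targets still sum to one, so they remain valid soft labels on the simplex. The derivative is in any case defined on the ambient space, because the loss $\log\sum_{k'}e^{f_{k'}}-y_j^\top f$ is defined for arbitrary $y_j$, so no projection is needed.

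For influence's shift, the plan is to invoke the scoring convention of Appendix~\ref{app:multiclass}. Influence is a derivative with respect to the scalar weight $w_j$, so it has only one column per training point. Used as a target-sensitivity estimator, its predicted shift for a change at point $j$ of magnitude $\delta$ is $\delta\,\mathcal{I}_{i,j}$, and this is the stated expression. The final claim then follows by inspection. By \eqref{eq:true_class_decomp}, $\mathcal{I}_{i,j}$ depends only on $p_j$, $t_j$ and the slices $J_{i,j,k}$. It does not depend on $a$. By contrast, $J$'s shift contains $J_{i,j,a}$ and so varies with the destination class whenever the slices $J_{i,j,k}$ differ across $k$.

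The only step that is not mechanical is justifying the scoring convention for influence. I would state it explicitly as a definition of how a weight-sensitivity is applied to a label change of magnitude $\delta$. I would also point out that \eqref{eq:true_class_decomp} shows this column points mainly along $J_{:,j,t_j}$, with weight $1-p_{j,t_j}$, which explains the name \emph{true-class collapse}. Everything else is substitution.
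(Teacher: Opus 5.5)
Your proposal is correct and follows the paper's proof essentially line for line: you substitute the one-hot residuals $r_{j,t_j}=1-p_{j,t_j}$ and $r_{j,k}=-p_{j,k}$ into \eqref{eq:multiclass}, expand $J$'s shift as $\sum_k J_{i,j,k}\Delta y_{jk}$, and note that $\mathcal{I}_{i,j}$, applied through the scoring convention, depends on $i$, $j$ and the fitted model but not on $a$. One small correction: $J$'s shift varies with $a$ only when the slices $J_{i,j,k}$ differ across the classes $k\neq t_j$, not merely across all $k$.
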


\begin{proof}
Substitute the one-hot residuals $r_{j,t_j}=1-p_{j,t_j}$ and
$r_{j,k}=-p_{j,k}$ for $k\neq t_j$ into \eqref{eq:multiclass}.
$J$'s shift is $\sum_k J_{i,j,k}\Delta y_{jk}$. $\mathcal{I}_{i,j}$
depends on $i$, $j$ and the fitted model but not on $a$.
\end{proof}

Softmax probabilities are positive, so in \eqref{eq:true_class_decomp}
the true class $t_j$ of training point $j$ receives the only positive
weight, $1-p_{j,t_j}$, and every other class $k$ a negative weight
$-p_{j,k}$; these sum to $-(1-p_{j,t_j})$. Influence's column for a
training point therefore always points towards that point's own true
class, whatever target change is asked about. It gives the same
prediction whichever class the label moves to, and so contains no
information about where the label goes. Its size shrinks as well: the
whole column is $1-p_{j,t_j}$ times a weighted difference of columns of
$J$ (Appendix~\ref{app:multiclass}), so as the probability the
classifier assigns to $t_j$ approaches one, influence's shift goes to
zero.

The same reasoning suggests a problem for canonical TRAK
\citep{park2023trak}, whose formula weights each training example by
$\partial\ell/\partial f_j$. For canonical-link losses this weight is
$-r_j$. With its projection and ensembling held fixed, TRAK therefore
contains the same residual factor, and we expect it to share the three
failures. We do not test TRAK empirically: its reference implementation
supports only multi-class tasks (image, text and CLIP classification),
and applying it to the squared-error and single-logit models of our
tabular experiments would require model-output code that does not yet
exist.%
\footnote{For multi-class classification, as on MNIST, TRAK's weight
for training point $j$ is $1-p_{j,t_j}$, the prefactor of
Corollary~\ref{cor:true_class_collapse}, and it gives one score per
pair of prediction and training point. Its predicted effect of a label
change is therefore the same whichever class the label moves to, as for
influence.}

\section{Evaluating a target-sensitivity estimator}
\label{sec:r2}

The experiments compare $J$ and influence with refits, and so need a
score for how well a matrix predicts the effect of a target change. A
target-sensitivity estimator is a matrix $M\in\R^{S\times N}$ used to
predict the refit shift, $\Delta F_i \approx (M\Delta\by)_i$. Both $J$
and $\mathcal{I}$ are scored this way, with no rescaling.

\begin{definition}[Per-prediction $R^2$]
\label{def:r2}
For prediction $i$ and a perturbation distribution $\mathcal{D}$ on
$\R^N$ ($\R^{NK_c}$ for vector targets),
\begin{equation}
  R^2_{i}(M;\mathcal{D})
  \;:=\; 1 \;-\;
  \frac{\mathbb{E}_{\Delta\by\sim\mathcal{D}}
         \!\bigl[\bigl(\Delta F_i(\Delta\by) - (M\Delta\by)_i\bigr)^2\bigr]}
       {\operatorname{Var}_{\Delta\by\sim\mathcal{D}}
         \!\bigl[\Delta F_i(\Delta\by)\bigr]},
  \label{eq:r2}
\end{equation}
where $\Delta F_i(\Delta\by) := F_i(\by^\star+\Delta\by) - F_i(\by^\star)$
is the refit shift. The reported score is the median of $R^2_i$ over the
evaluation set.
\end{definition}

Unlike the usual regression $R^2$, which is taken across data points,
this one is taken across perturbations at a single prediction. In
practice we draw
$K$ perturbations
from $\mathcal{D}$ and refit once for each, so every
prediction has $K$ pairs of refit and predicted shifts, and $R^2_i$ is
\texttt{sklearn.metrics.r2\_score} \citep{pedregosa2011scikit} on those
pairs. Predicting the mean refit shift for every perturbation scores
zero, and an estimator that gets the direction wrong scores below zero.
We report the median over predictions rather than the mean, which a
few failing predictions would otherwise dominate.

At prediction $i$, let $\rho_i$ be the correlation between predicted
and refit shifts over $\mathcal{D}$, $\alpha_i$ the standard deviation
of the predicted shift divided by that of the refit shift, and
$\beta_i$ the difference of their means divided by the standard
deviation of the refit shift. The Mincer--Zarnowitz identity
\citep{mincer1969evaluation} splits the score into direction, scale and
bias (Lemma~\ref{lem:mincer_zarnowitz}),%
\footnote{In the Mincer--Zarnowitz regression of refit shifts on
predicted shifts, $\alpha$ and $\beta$ usually denote the intercept and
the slope. Here $\alpha_i$ is a scale and $\beta_i$ a standardised bias,
as defined above.}
\begin{equation}
  R^2_{i}
  \;=\;
  2\rho_i\alpha_i \;-\; \alpha_i^2 \;-\; \beta_i^2.
  \label{eq:r2_decomp}
\end{equation}
Among the failures of Section~\ref{sec:failures}, interpolation
collapse drives $\alpha_i$ to zero, direction washout drives $\rho_i$
to zero, and true-class collapse lowers both. Pearson
correlation and rank-correlation scores, including the linear
datamodelling score \citep{ilyas2022datamodels,park2023trak}, ignore
scale and so cannot detect interpolation collapse. We therefore report
$R^2_i$ with its three components beside it.

\section{Experiments}
\label{sec:experiments}

Every experiment follows the same design. It trains a base model to
stationarity, draws $K$ perturbations of training targets, refits each
one by warm-started L-BFGS from the base parameters, and scores $J$ and
influence against the same refits. Warm-starting matters because it
targets the locally defined solution $\btheta^\star(\by)$ that $J$
differentiates, whereas retraining from a fresh initialisation can
reach a different minimum. Every fit, base or refit, stops on the same
gradient-norm rule. Both operators use the exact Hessian of the
training objective, so neither score includes the error of a surrogate
Hessian. Tables~\ref{tab:tabular_diag} and~\ref{tab:mnist_diag} also
report $J$ with the damped Gauss--Newton Hessian of
Remark~\ref{cor:surrogate_mod}.
Appendix~\ref{app:protocol} gives the architectures, perturbation
distributions and the stopping rule. Kernel ridge regression, where
$J$ is the smoother matrix and scores $R^2=1.000$ exactly, is in
Appendix~\ref{app:krr} (Table~\ref{tab:r2_krr}).

\subsection{Tabular data}
\label{sec:exp_tabular}

Table~\ref{tab:r2_mlp} reports one-hidden-layer tanh networks on
Friedman\,\#1 \citep{friedman1991multivariate} with squared error and
$N=2{,}000$, MAGIC Gamma \citep{bock2004magic} with logistic loss and
$N=14{,}265$, and HIGGS \citep{baldi2014searching} with logistic loss
and $N=10^6$. Entries are medians over evaluation predictions,
averaged over five seeds. The random row gives a floor.

\begin{table}[t]
\centering
\caption{Per-prediction $R^2$ \eqref{eq:r2} and its direction, scale and
bias components \eqref{eq:r2_decomp} against refits. Medians over
predictions, mean $\pm$ SD over five seeds. $K=100$ perturbations
per seed, and $K=20$ on MNIST. Both operators use the exact Hessian.
Random applies one shared permutation
to the perturbation axis of the refit shifts, which breaks the pairing
while keeping each prediction's distribution of shifts. Each column is
summarised separately, so the entries need not satisfy
\eqref{eq:r2_decomp} exactly. Bold marks the better $R^2$ of $J$ and
influence.}
\label{tab:r2_mlp}
\small
\setlength{\tabcolsep}{4pt}
\input{table_r2_mlp}
\end{table}

On the three tabular datasets of Table~\ref{tab:r2_mlp}, $J$ reaches $R^2$ between 0.99 and 1.00, with direction $\rho$
of at least 0.99, scale $\alpha$ between 1.00 and 1.02 and bias near zero.
Influence's $R^2$, by contrast, is negative on all three datasets,
because its direction is washed out ($\rho$ between $-0.08$ and
$0.03$; Appendix~\ref{app:protocol} covers the class-dependent
perturbations of MAGIC and HIGGS). A model that fits well over-predicts about as much as it
under-predicts, so its residuals average close to zero, and unless
their sign is related to the weights $\pi_{ij}$ of
Corollary~\ref{cor:sign_washout}, the weighted mean $\mu_i$ is small
relative to their spread. With $\rho$ near zero, \eqref{eq:r2_decomp} leaves
$-\alpha^2-\beta^2$, which is negative at any scale. On MAGIC and HIGGS
the scale is also shrunk ($\alpha$ of 0.30 and 0.47), because logistic
residuals lie between $-1$ and $1$. On Friedman the loss is
squared error, whose residual $r=y-f$ has the units of the target. Measuring the
target in units $c$ times smaller therefore multiplies influence's
$\alpha$ by $c$, which means its value there (0.68) is not a measure of
shrinkage. With the Gauss--Newton Hessian, $J$'s $R^2$ is between 0.93
and 0.96 and influence's is still negative
(Table~\ref{tab:tabular_diag}).

\subsection{Softmax classifier on MNIST}
\label{sec:exp_mnist}

MNIST \citep{lecun1998gradient} tests $J$ and true-class collapse on a
softmax classifier. The design follows the
tabular runs. Each seed draws $N=2{,}000$ training images, the same $N$
as on Friedman, and 2,000 test images. It fits the same one-hidden-layer
tanh network of width 64 to the flattened images, so $P$ is about
41,000 (Table~\ref{tab:mnist_diag}). The prediction $q_i$ is the logit
of the base model's predicted class at test image $i$. Each seed draws
$K=20$ label changes $\Delta\by_j=\delta(e_a-e_t)$, with $j$ uniform on
$[N]$, $t$ the training image's true class, $a$ a random other class
and $\delta\sim\mathrm{Unif}[0.01,0.30]$, the range used on MAGIC and
HIGGS. $K$ is smaller than on the tabular data because each refit
takes up to half a minute on average (Appendix~\ref{app:protocol}). With the Gauss--Newton Hessian, $J$'s
$R^2$ is 0.74 (Table~\ref{tab:mnist_diag}).

In Table~\ref{tab:r2_mlp}, $J$ reaches $R^2$ of 0.92 on MNIST, with
direction $\rho$ of 0.97 and scale $\alpha$ of 0.99.
Influence's direction is weak ($\rho$ of $-0.37$), because its
predicted shift is the same whichever class the label moves to, as
true-class collapse predicts (Appendix~\ref{app:multiclass}).%
\footnote{Influence scores a label change as a change in sample weight
(Appendix~\ref{app:multiclass}), and the sign of that correspondence is
a convention. Scoring influence with the opposite sign flips its $\rho$,
leaves its $\alpha$ unchanged, and its $R^2$ stays negative on every
seed.}
Its scale is also shrunk ($\alpha$ of 0.10). The classifier assigns
a training point's true class a probability close to one (median
between 0.92 and 0.93 across seeds; Table~\ref{tab:mnist_diag}), so the
prefactor $1-p_{j,t_j}$ of Corollary~\ref{cor:true_class_collapse} is
small. Influence's predicted shifts are therefore close to zero, and
its $R^2$ is negative ($-0.20$).

\section{Application: Censored house prices}
\label{sec:exp_california}

California Housing \citep{pace1997sparse} gives the median house value
of 20,640 census block groups. Values above \$500{,}000 are top-coded
at \$500{,}001, so the 965 most expensive block groups share one
recorded target. How a fitted model's predictions would change if the
top-coded values were higher is a what-if about training targets.
Increasing the top-coded targets by a fraction
$\kappa$ is the target change $\Delta y_j=\kappa y_j$ on the top-coded
training points, with $\Delta y_j=0$ elsewhere. $J$ predicts the shift
in the predictions as $J\Delta\by$. Influence predicts
$\mathcal{I}\Delta\by=J\diag(r)\Delta\by$, which weights the target
change of each top-coded point by that point's residual. Each of the
two predicted shifts costs one Hessian solve, for all
held-out predictions at once.

We fit the squared-error network of Section~\ref{sec:exp_tabular} to a
random three quarters of the rows ($N=15{,}480$), increase the
top-coded targets by 5\%, 10\%, 20\% and 50\%, refit, and compare the
shift in the 5,160 held-out predictions with the shifts that $J$ and
influence predict. Each increase is a single target change rather than
$K$ random perturbations, so $R^2$ and its components are computed
across the held-out predictions of that target change.
Appendix~\ref{app:california} gives the protocol.

\begin{table}[t]
\centering
\caption{Increasing the top-coded house values of California Housing.
$R^2$ and its components \eqref{eq:r2_decomp} are computed across the
5,160 held-out predictions of each target change. Seed 0.}
\label{tab:california}
\small
\setlength{\tabcolsep}{4pt}
\input{table_california}
\end{table}

\begin{figure}[t]
\centering
\includegraphics[width=0.85\linewidth]{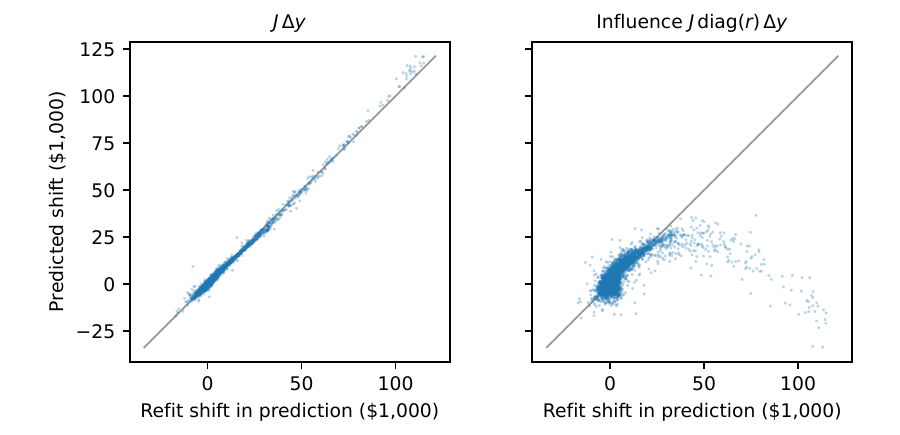}
\caption{Increasing the top-coded house values by 20\% (California
Housing, seed 0). Each point is one of 5,160 held-out block groups. The
horizontal axis is the shift in its prediction after refitting, and
the vertical axis is the shift predicted by $J$ (left) and by
influence (right). The grey line marks equality.}
\label{fig:california}
\end{figure}

In Figure~\ref{fig:california}, $J$ follows the refit over the whole
range of shifts, whereas influence misses most on the block groups
whose predictions move the most. $J$'s $R^2$ lies between 0.98 and
1.00 and influence's between 0.21 and 0.27
(Table~\ref{tab:california}). Influence's direction is only
partly right, with $\rho$ between 0.47 and 0.53
(its $\alpha$ depends on the unit of the
target, as on Friedman).

\section{Limitations}
\label{sec:discussion}

$J$ is a first-order sensitivity at a stationary point. How well it
predicts the effect of a finite target change depends on how nonlinear
the refit is in that change, which varies with the learner and the size
of the change. This limit, like the need for white-box
access and one Hessian solve per row or column, applies equally to
influence. Any difficulty in computing the solve accurately also
affects both operators alike, because the modulation identity holds
for any shared Hessian or surrogate. On MNIST, replacing the exact
Hessian by the Gauss--Newton surrogate lowers $J$'s $R^2$ from 0.92 to
0.74 (Table~\ref{tab:mnist_diag}). The modulation
identity also needs a pointwise loss with $\ell_{fy}\neq 0$, which
every canonical-link loss satisfies. Where $\ell_{fy}=0$, as in the
linear region of the Huber loss or almost everywhere for the $L_1$
loss, the modulation factor is undefined. The fit does not respond to
that target to first order, so $J_{:,j}=0$. Influence's column
$\mathcal{I}_{:,j}$, however, is in general nonzero, and no factor
$m_j$ can scale a zero column into a nonzero one.

\bibliographystyle{tmlr}
\bibliography{refs}

\newpage
\appendix

\section{Notation}
\label{app:notation}

Table~\ref{tab:notation} lists the symbols used across sections,
including those defined inside a proof. Standard notation, and symbols
used only within a single passage, proof or appendix, are defined where
they are used.

Influence $\mathcal{I}_{ij}$ is the derivative with respect to
upweighting training point $j$ in the $(1/N)$-normalised loss.
Leave-one-out work uses the opposite sign, and the additive-$\epsilon$
form of \citet{koh2017understanding} drops the $1/N$. The modulation
identity holds in those conventions up to the corresponding global sign
or scale.

\begin{table}[htbp]
\centering
\caption{Global notation, grouped by topic, with the place where each
symbol is defined or first used.}
\label{tab:notation}
\small
\renewcommand{\arraystretch}{1.1}
\begin{tabular}{@{}lp{8.6cm}l@{}}
\toprule
Symbol & Meaning & Where \\
\midrule
\multicolumn{3}{@{}l}{\emph{Data and indices}} \\
$N$, $S$, $P$ & Numbers of training points, evaluation predictions and
  parameters & Sec.~\ref{sec:identity} \\
$i\in[S]$, $j\in[N]$ & Index of an evaluation prediction and of a
  training point; $[N]:=\{1,\dots,N\}$ & Sec.~\ref{sec:identity} \\
$\by$, $\by^\star$ & Training targets $(y_1,\dots,y_N)$; the observed
  targets & Sec.~\ref{sec:identity} \\
$K_c$, $y_{jk}$ & Number of classes; class-$k$ entry of a vector target
  $y_j\in\R^{K_c}$ & Sec.~\ref{sec:identity} \\
\addlinespace
\multicolumn{3}{@{}l}{\emph{Model, loss and training}} \\
$\btheta\in\R^P$, $\btheta^\star$ & Parameters; the fitted parameters,
  a stationary point & Sec.~\ref{sec:identity} \\
$\btheta^\star(\by)$ & Locally defined solution for targets $\by$, with
  $\btheta^\star(\by^\star)=\btheta^\star$ & Sec.~\ref{sec:identity} \\
$\mathcal{L}(\btheta;\by)$, $\mathcal{L}(\btheta;\by,\bw)$ & Training
  objective; the same with sample weights $\bw$ & Sec.~\ref{sec:identity} \\
$\ell_j$, $R$ & Per-example loss; regulariser & Sec.~\ref{sec:identity} \\
$\bw$ & Sample weights, $\bw=\mathbf{1}$ at the trained model &
  Sec.~\ref{sec:identity} \\
$H$ & Hessian of $\mathcal{L}$ at $(\btheta^\star,\by^\star)$ &
  Sec.~\ref{sec:identity} \\
$f_{\btheta}(x)$, $f_j$ & Model output; $f_j:=f_{\btheta^\star}(x_j)$ &
  Sec.~\ref{sec:identity} \\
$q_i(\btheta)$ & Scalar prediction at $z_i$, such as a logit &
  Def.~\ref{def:target_response} \\
$\ell(f,y)$, $\ell_f$, $\ell_{fy}$ & Pointwise loss; its partial
  derivative in $f$ and mixed partial in $f$ and $y$ &
  Sec.~\ref{sec:identity} \\
$h$, $g$ & The functions in the canonical-link form
  $\ell(f,y)=h(f)-yf+g(y)$ & Cor.~\ref{cor:canonical_link} \\
$p_k(\btheta,x)$, $p_{j,k}$ & Softmax probability of class $k$; at
  $x_j$ and $\btheta^\star$ & Sec.~\ref{sec:identity} \\
\addlinespace
\multicolumn{3}{@{}l}{\emph{Operators}} \\
$F(\by)$, $F(\by,\bw)$ & Target-response map, the predictions as a
  function of the targets; the same with sample weights &
  Def.~\ref{def:target_response} \\
$\hat\by$ & Predictions, $\hat\by:=F(\by)$ & Eq.~\eqref{eq:local-identity} \\
$J$ & Target-response Jacobian $\partial F/\partial\by$ at $\by^\star$,
  in $\R^{S\times N}$ ($\R^{S\times N\times K_c}$ for vector targets) &
  Def.~\ref{def:target_jacobian} \\
$\mathcal{I}$ & Sample-weight influence $\partial F/\partial\bw$ at
  $(\by^\star,\mathbf{1})$, in $\R^{S\times N}$ & Sec.~\ref{sec:identity} \\
$G$, $g_i$ & $S\times P$ matrix with rows $g_i^\top$, where
  $g_i:=\nabla_{\btheta}q_i(\btheta^\star)$ & Proof of
  Thm.~\ref{thm:scalar_mod} \\
$C$, $c_j$ & $P\times N$ matrix with columns
  $c_j:=(1/N)\nabla_{\btheta}(\partial\ell_j/\partial y_j)$ & Proof of
  Thm.~\ref{thm:scalar_mod} \\
$B$, $b_j$ & $P\times N$ matrix with columns
  $b_j:=(1/N)\nabla_{\btheta}\ell_j$, the per-example loss gradients &
  Proof of Thm.~\ref{thm:scalar_mod} \\
$\psi_j$ & Output gradient $\nabla_{\btheta}f_{\btheta^\star}(x_j)$ &
  Proof of Thm.~\ref{thm:scalar_mod} \\
$m_j$, $m$ & Modulation factor $\ell_f/\ell_{fy}$ at
  $(f_j,y_j^\star)$; the vector of all $m_j$ & Eq.~\eqref{eq:mu_def} \\
$r_j$, $r$ & Training residual $y_j^\star-h'(f_j)$; the vector of all
  $r_j$ & Cor.~\ref{cor:canonical_link} \\
$r_{jk}$ & Class-$k$ residual $y_{jk}-p_k(\btheta^\star,x_j)$ &
  Prop.~\ref{prop:multiclass} \\
$\hat H$, $\hat J$, $\hat{\mathcal I}$ & Surrogate Hessian; $J$ and
  $\mathcal{I}$ computed with it & Rem.~\ref{cor:surrogate_mod} \\
$H_{\mathrm{GN}}$, $\lambda_{\mathrm{damp}}$ & Gauss--Newton Hessian;
  damping, equal to the L2 penalty & Rem.~\ref{cor:surrogate_mod} \\
$K$, $K_{\mathrm{OLS}}$, $K_{\mathrm{KRR}}$, $K_{\mathrm{AXIL}}$ &
  Smoother matrices, with $\hat\by=K\by$ exactly &
  Table~\ref{tab:lineage} \\
\addlinespace
\multicolumn{3}{@{}l}{\emph{Perturbations and evaluation}} \\
$\Delta\by$, $\bdelta$ & Target change (perturbation, if random) &
  Eq.~\eqref{eq:taylor_F} \\
$t_j$, $a$ & True (recorded) class of training point $j$; destination class of a
  label change $\delta(e_a-e_{t_j})$ &
  Cor.~\ref{cor:true_class_collapse} \\
$K$ & Number of perturbations & Sec.~\ref{sec:r2} \\
$\Delta F_i$ & Refit shift of prediction $i$ & Def.~\ref{def:r2} \\
$R^2_i$ & Per-prediction coefficient of determination &
  Def.~\ref{def:r2} \\
$\rho_i$, $\alpha_i$, $\beta_i$ & Direction, scale and bias components
  of $R^2_i$ & Sec.~\ref{sec:r2} \\
\bottomrule
\end{tabular}
\end{table}

\section{Direction, scale and bias}
\label{app:proofs}

\begin{lemma}[Mincer--Zarnowitz decomposition]
\label{lem:mincer_zarnowitz}
Let $\rho_i$, $\alpha_i$ and $\beta_i$ be as in Section~\ref{sec:r2},
with both standard deviations positive. Then
$R^2_{i} = 2\rho_i\alpha_i - \alpha_i^2 - \beta_i^2
= \rho_i^2 - (\alpha_i-\rho_i)^2 - \beta_i^2$.
\end{lemma}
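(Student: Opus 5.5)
The plan is to expand the mean squared error in the numerator of \eqref{eq:r2} into bias and centred-variance parts, and then normalise by the variance of the refit shift. Fix prediction $i$. Write $A:=\Delta F_i(\Delta\by)$ for the refit shift and $P:=(M\Delta\by)_i$ for the predicted shift, both random under $\mathcal{D}$ (or under the empirical distribution of the $K$ draws, where the same algebra applies verbatim with population moments). Let $\sigma_A,\sigma_P>0$ be their standard deviations, $\mu_A,\mu_P$ their means, and $\rho_i=\operatorname{Cov}(A,P)/(\sigma_A\sigma_P)$. With these, $\alpha_i=\sigma_P/\sigma_A$ and $\beta_i=(\mu_P-\mu_A)/\sigma_A$. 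The sign of $\beta_i$ is irrelevant because it enters squared.

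The key step is the bias--variance split
\[
\mathbb{E}[(A-P)^2]=\operatorname{Var}(A-P)+(\mathbb{E}[A-P])^2
=\sigma_A^2-2\rho_i\sigma_A\sigma_P+\sigma_P^2+(\mu_A-\mu_P)^2 .
\]
Dividing by $\operatorname{Var}(A)=\sigma_A^2$ gives
\[
\frac{\mathbb{E}[(A-P)^2]}{\sigma_A^2}=1-2\rho_i\alpha_i+\alpha_i^2+\beta_i^2 .
\]
Substituting this into \eqref{eq:r2} yields $R^2_i=2\rho_i\alpha_i-\alpha_i^2-\beta_i^2$. The second form then follows by completing the square:
\[
2\rho_i\alpha_i-\alpha_i^2=\rho_i^2-(\alpha_i-\rho_i)^2 .
\]

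There is no real obstacle. The only care needed is to use the same normalisation for the variance in both the denominator and the component definitions. If population variances are used consistently (as \texttt{r2\_score} does with its $1/K$ sums), the identity is exact. The positivity of $\sigma_P$ is needed only so that $\rho_i$ is defined.
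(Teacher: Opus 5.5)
Your proof is correct and follows the paper's argument: it splits $\mathbb{E}[(s-\hat s)^2]$ into squared bias plus $\sigma_s^2+\sigma_{\hat s}^2-2\rho_i\sigma_s\sigma_{\hat s}$, divides by $\sigma_s^2$ and subtracts from one. You also spell out the completing-the-square step for the second form and the need for a consistent variance normalisation, both of which the paper leaves implicit.
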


\begin{proof}
Let $s:=\Delta F_i$ and $\hat s:=(M\Delta\by)_i$ under
$\Delta\by\sim\mathcal{D}$, with means $\mu_s,\mu_{\hat s}$ and
standard deviations $\sigma_s,\sigma_{\hat s}$, so that
$\alpha_i=\sigma_{\hat s}/\sigma_s$ and
$\beta_i=(\mu_{\hat s}-\mu_s)/\sigma_s$. Then
$\mathbb{E}[(s-\hat s)^2]
 = (\mu_s-\mu_{\hat s})^2 + \sigma_s^2 + \sigma_{\hat s}^2
   - 2\rho_i\sigma_s\sigma_{\hat s}
 = \sigma_s^2\bigl(\beta_i^2 + 1 + \alpha_i^2 - 2\rho_i\alpha_i\bigr)$.
Dividing by $\operatorname{Var}[s]=\sigma_s^2$ and subtracting from one
gives the result.
\end{proof}


\section{Kernel ridge regression}
\label{app:krr}

For kernel ridge regression with training Gram matrix
$K_{XX}\in\R^{N\times N}$, evaluation kernel $K_{QX}\in\R^{S\times N}$
and ridge parameter $\lambda>0$, the fit
$\hat\by = K_{QX}(K_{XX}+\lambda I)^{-1}\by$ is linear in $\by$
\citep{saunders1998ridge}, and $J = K_{QX}(K_{XX}+\lambda I)^{-1}$ is the
smoother matrix \citep[eq.~5.57]{hastie2009elements} at every $\by$.
Corollary~\ref{cor:canonical_link} gives
$\mathcal{I} = J\,\diag(r)$ with
$r_j = y_j - [K_{XX}(K_{XX}+\lambda I)^{-1}\by]_j$. As $\lambda\to 0$ with
$K_{XX}$ invertible, $r\to 0$ and $\mathcal{I}\to 0$, while
$J\to K_{QX}K_{XX}^{-1}$ stays finite. This is interpolation collapse in
closed form.

Table~\ref{tab:r2_krr} uses Friedman\,\#1 ($N=2{,}000$, $S=2{,}000$)
with an RBF kernel of bandwidth 1 on the raw features, and $K=100$
perturbations drawn as on Friedman in Appendix~\ref{app:protocol}. The fit is linear in the targets, so $J\Delta\by$
equals the refit shift and $J$'s $R^2$ is $1.000$ at every $\lambda$. Influence's $R^2$
approaches zero, the score of predicting no change, as $\lambda\to 0$.

\begin{table}[htbp]
\centering
\caption{Kernel ridge regression on Friedman\,\#1. Median per-prediction
$R^2$ over $2{,}000$ predictions.}
\label{tab:r2_krr}
\small
\input{table_r2_krr}
\end{table}


\section{Softmax cross-entropy}
\label{app:multiclass}

\begin{proof}[Proof of Proposition~\ref{prop:multiclass}]
Let $\psi_{jk} := \nabla_{\btheta} f_k(\btheta^\star,x_j)$. For
$\ell_j = h(f(\btheta,x_j)) - y_j^\top f(\btheta,x_j)$ with
$h(f)=\log\sum_{k'}e^{f_{k'}}$, the target derivative in class $k$ is
$\partial\ell_j/\partial y_{jk} = -f_k$, so
$c_{jk} = (1/N)\nabla_{\btheta}(\partial\ell_j/\partial y_{jk}) = -(1/N)\psi_{jk}$.
The loss gradient is
$\nabla_{\btheta}\ell_j = \sum_k(p_k - y_{jk})\psi_{jk} = -\sum_k r_{jk}\psi_{jk}$,
so $b_j = \sum_k r_{jk}c_{jk}$. Substituting into
$\mathcal{I}_{:,j} = -GH^{-1}b_j$ and using $J_{:,j,k} = -GH^{-1}c_{jk}$
gives $\mathcal{I}_{:,j} = \sum_k r_{jk}J_{:,j,k}$. The same steps with
$\hat H$ in place of $H$ give the surrogate version.
\end{proof}

With $K_c=2$ and the single logit $f:=f_1-f_2$ this reduces to the
logistic case of Corollary~\ref{cor:canonical_link}.

Class probabilities sum to one, so only perturbations with
$\sum_k \Delta y_{jk} = 0$ give valid targets. The partials $J_{:,j,k}$
ignore this constraint, but \eqref{eq:multiclass} is unaffected:
because the residuals also satisfy $\sum_k r_{jk}=0$, any part of
$J_{:,j,k}$ that is common to all classes $k$ drops out of the sum.

\paragraph{Scoring influence on label changes.}
$\mathcal{I}_{:,j}$ is one vector per training point, while a label
change $\Delta\by_j=\delta(e_a-e_t)$ is a vector in $\R^{K_c}$. Scoring
influence therefore needs a convention for sign and magnitude. We use
$\delta\,\mathcal{I}_{:,j}$, which treats the label change as
upweighting training point $j$ by $\delta$, the same scalar mass that
enters $J$'s prediction $\sum_k J_{:,j,k}\Delta y_{jk}$, with no
rescaling.
Scaling by the norm $\sqrt2\,|\delta|$ of the label change instead
would multiply influence's $\alpha_i$ by $\sqrt2$ and leave $\rho_i$
unchanged. Its $R^2$ would stay negative wherever $\rho_i<0$, since
$2\rho_i\alpha_i-\alpha_i^2-\beta_i^2<0$ for any $\alpha_i>0$.

\paragraph{Why influence's direction is weak.}
Factoring $(1-p_{j,t_j})$ out of \eqref{eq:true_class_decomp},
\[
  \mathcal{I}_{:,j}
  = (1-p_{j,t_j})\bigl[J_{:,j,t_j} - \bar J_{:,j}\bigr],
  \qquad
  \bar J_{:,j} := \sum_{k\neq t_j} \tfrac{p_{j,k}}{1-p_{j,t_j}}\,J_{:,j,k},
\]
with $\bar J_{:,j}$ a probability-weighted average of the other-class
columns. Write $u_i:=J_{i,j,t_j}-\bar J_{i,j}$ for the true-class term
and $v_i:=J_{i,j,a}-\bar J_{i,j}$ for the destination term. Then $J$'s
shift is $\delta(v_i-u_i)$ and influence's is
$\delta(1-p_{j,t_j})u_i$, up to the sign convention above. Influence
carries the true-class term $u_i$ but not the destination term $v_i$.
Treating $1-p_{j,t_j}$ as roughly constant across perturbations, and
with $v_i$ uncorrelated with $u_i$, as a random destination class makes
plausible, the correlation between influence and $J$ has magnitude
$\sigma_u/\sqrt{\sigma_u^2+\sigma_v^2}$ under either sign, where
$\sigma_u$ and $\sigma_v$ are the standard deviations of $u_i$ and
$v_i$ over the perturbations. It is well below one whenever the
destination term varies as much as the true-class term. On MNIST,
influence's $|\rho_i|$ against the refit, which $J$ tracks, is 0.37
(Table~\ref{tab:r2_mlp}).


\section{Protocol}
\label{app:protocol}

\paragraph{Tabular networks.}
Friedman\,\#1, MAGIC Gamma and HIGGS use a one-hidden-layer tanh
network of width 64 (Friedman, MAGIC) or 128 (HIGGS). The smooth
activation makes the loss twice continuously differentiable in
$\btheta$, as the theory requires.
Training is full-batch Adam ($\mathrm{lr}=5\cdot 10^{-3}$, up to 4,000
steps on Friedman and MAGIC and 2,000 on HIGGS, with early exit once the
training loss falls below $10^{-4}$) followed by L-BFGS with
strong-Wolfe line search until the stopping rule below is met. The objective
includes the penalty $\tfrac12\lambda_{\mathrm{damp}}\|\btheta\|^2$
with $\lambda_{\mathrm{damp}}=10^{-2}$. On Friedman the per-example loss
is $(f-y)^2$, twice the canonical form $\tfrac12(f-y)^2$, so there
$h''=2$ and $c_j=-(2/N)\psi_j$. This leaves $m_j=r_j$ unchanged.

Friedman\,\#1 has 10 features drawn uniformly on $[0,1]$ and Gaussian
noise with standard deviation 0.5, and each seed draws a new sample.
MAGIC uses the 19,020 rows of the OpenML copy, and each seed draws a new
stratified split with a quarter of the rows held out for evaluation.
The HIGGS split is deterministic in file order. The first $10^6$ rows of
the UCI file form the training set and the next $10^4$ the evaluation
set. MAGIC and HIGGS features are standardised with training-set
statistics. The seed also sets the initialisation and the
perturbations. On MAGIC and HIGGS the prediction $q_i$ is the logit of
evaluation point $i$, so all shifts are in logits.

Friedman and MAGIC run in float64 on the CPU, HIGGS in float64 on the
GPU. Both operators use the exact Hessian $H$ of the regularised
training objective \eqref{eq:exact_hessian}. With $P=769$ (Friedman,
MAGIC) or $3{,}841$ (HIGGS) parameters, $H$ is formed as a dense
$P\times P$ matrix from Hessian-vector products, checked to be
positive definite, and factorised by Cholesky, so every solve is
direct. Influence is computed from its
definition, with the loss gradients $B$ and its own solves, so it agrees
with $J\diag(r)$ up to solver error. For comparison, both operators
are also computed with the damped Gauss--Newton Hessian, in float32 by
conjugate gradients to a relative solve error of $10^{-4}$. Table~\ref{tab:tabular_diag} gives
the size of each run and its fit and solver diagnostics.

\begin{table}[htbp]
\centering
\caption{Tabular runs: sizes, stopping rules and solver checks, pooled
over seeds. $\|C\Delta\by\|$ is the change in the training-loss
gradient caused by one perturbation. The identity row is the relative
error of the canonical-link identity $B=C\operatorname{diag}(r)$ on
the computed loss gradients. The last three rows use the damped
Gauss--Newton Hessian in place of $H$.}
\label{tab:tabular_diag}
\small
\input{table_tabular_diag}
\end{table}

\paragraph{Gauss--Newton Hessian.}
$H$ can be indefinite for nonlinear models, and a common surrogate
replaces the data term by the Gauss--Newton Hessian
\begin{equation}
  H_{\mathrm{GN}}
  := \tfrac{1}{N}\sum_{j=1}^{N}
    h''\bigl(f_{\btheta^\star}(x_j)\bigr)\psi_j\psi_j^\top,
\end{equation}
with $h''=1$ for squared error in the canonical form ($h''=2$ on
Friedman) and $h''=\sigma(f)(1-\sigma(f))$ for logistic loss. For softmax cross-entropy the scalar $h''$ becomes the
softmax Jacobian $\partial p_j/\partial f_j=\diag(p_j)-p_jp_j^\top$ and $\psi_j$ becomes the
$P\times K_c$ Jacobian of the logits. Adding the penalty Hessian
$\lambda_{\mathrm{damp}}I$ gives
$\hat J := -G(H_{\mathrm{GN}}+\lambda_{\mathrm{damp}}I)^{-1}C$. For
linear-in-parameters squared error this is the exact $J$. In general,
for a canonical-link loss with the L2 penalty, the exact Hessian is
\begin{equation}
  H = H_{\mathrm{GN}} + \lambda_{\mathrm{damp}}I
  - \frac1N\sum_{j=1}^N r_j\,\nabla_{\btheta}^2 f_{\btheta^\star}(x_j),
  \label{eq:exact_hessian}
\end{equation}
so Gauss--Newton drops the residual-weighted curvature of the model
output. For softmax cross-entropy the last term becomes
$-(1/N)\sum_{j,k} r_{jk}\nabla_{\btheta}^2 f_k(\btheta^\star,x_j)$. The
dropped term is small when the residuals or the output curvature are
small, but stationarity alone does not control it.

\paragraph{Tabular perturbations.}
Each perturbation moves one training target, $\Delta\by^{(k)}=\delta_k
e_{j_k}$, with $j_k$ uniform on $[N]$. On Friedman
$\delta=s\,u\,\sigma_y$, with sign $s$ uniform on $\{-1,+1\}$,
$u\sim\mathrm{Unif}[0.1,1.0]$ and $\sigma_y$ the standard deviation of
the training targets. On MAGIC and HIGGS
the soft-target perturbation is $\delta = (1-2y_j)\,\nu$ with
$\nu\sim\mathrm{Unif}[0.01,0.30]$, which keeps $y_j+\delta$ in
$[0.01,0.99]$. This ties the sign of $\delta$ to the training point's
class, so it is not
the symmetric distribution of Corollary~\ref{cor:sign_washout}. Washout
still applies because logistic residuals have class-dependent signs
($r_j>0$ for positives, $r_j<0$ for negatives). Influence's $\rho$
near zero on both datasets (Table~\ref{tab:r2_mlp}) confirms this.

\paragraph{Refits.}
Each perturbed problem is refit by warm-started L-BFGS from
$\btheta^\star$ with the same L2 penalty and the same stopping rule,
with an iteration cap. All estimators are scored against the same refits.

\paragraph{Stopping rule.}
A perturbation changes the training-loss gradient at $\btheta^\star$
by $\|C\Delta\by\|$, which shrinks as $1/N$. A fit that stops with a
gradient norm of that size leaves a refit error as large as the
response being measured. Every fit, base or refit, therefore runs
until the gradient norm of the full objective is below
\begin{equation*}
  \tau = \min\bigl(10^{-6},\; 0.05\cdot\operatorname{median}_k \|C\Delta\by^{(k)}\|\bigr),
\end{equation*}
with the median over the perturbations of the seed (on California
Housing, over its four target changes). On Friedman,
MAGIC, MNIST and California Housing $\tau=10^{-6}$; on HIGGS, where
$N=10^6$, it is smaller. Tables~\ref{tab:tabular_diag} and
\ref{tab:mnist_diag} give the largest
gradient norms and the largest remaining gradient of a refit as a
fraction of its own $\|C\Delta\by\|$. Both also count the
refits that stop at the iteration cap above $\tau$.
Because $\tau$ is set from the median, a perturbation whose
$\|C\Delta\by\|$ is far below the median, from a small $|\delta|$ or a
training point with a small output gradient, can stop at a gradient of
the same order as its own $\|C\Delta\by\|$. This happens on MAGIC and
HIGGS (Table~\ref{tab:tabular_diag}). Such a refit meets the same
absolute tolerance as every other refit, and its shift is
correspondingly small, so it adds little to the variance of refit
shifts in the denominator of $R^2_i$. Refit error that is unrelated to
$J$'s own error lowers $J$'s measured $R^2$, so $J$'s $R^2$ of 0.99 and
above on these datasets also bounds how much the refit error can
matter.

\paragraph{MNIST.}
Images are scaled to $[0,1]$ and flattened, and pixels with zero
variance in the training set are dropped, which leaves $d$ inputs
(Table~\ref{tab:mnist_diag}). The network is
$\mathrm{Linear}(d,64)\to\tanh\to\mathrm{Linear}(64,10)$,
trained in float64 on the GPU by full-batch Adam (1,200 epochs,
$\mathrm{lr}=5\cdot 10^{-3}$), then L-BFGS until the stopping rule is
met, with the same L2 penalty $\lambda_{\mathrm{damp}}=10^{-2}$. Refits
are warm-started L-BFGS on all parameters to the same tolerance. A
refit takes 6 to 34 seconds, averaged over the perturbations of a seed,
so each seed uses $K=20$ perturbations.

Both operators use the exact Hessian $H$ of the regularised training
objective \eqref{eq:exact_hessian}, which differs from
$H_{\mathrm{GN}}+\lambda_{\mathrm{damp}}I$ by the residual-weighted
curvature term. Each solve uses MINRES, since $H$ need not be positive
definite for a nonconvex network, and is checked to reach a relative
solve error $\|Hu-v\|/\|v\|$ below $10^{-6}$.
Table~\ref{tab:mnist_diag} gives the largest such error and the
smallest $u^\top Hu/u^\top u$ over the solutions $u$ for $J$, which is
positive. Influence is computed from its definition, with right-hand
side $b_j$ built from the loss gradient and its own MINRES solve, and
the identity $b_j=\sum_k r_{jk}c_{jk}$ is checked on the right-hand
side. For comparison, $J$ is also computed with the Gauss--Newton
Hessian, by conjugate gradients in float64 to a relative solve error
of $10^{-8}$, on the same perturbations.

\begin{table}[htbp]
\centering
\caption{MNIST: sizes, stopping tolerances and solver checks, pooled
over the five seeds. $\|C\Delta\by\|$ is the change in the training-loss
gradient caused by the label change, and $u$ is the exact-Hessian
solution for one label change. The last row gives $J$'s $R^2$
with the Gauss--Newton Hessian.}
\label{tab:mnist_diag}
\small
\input{table_mnist_diag}
\end{table}


\section{California Housing}
\label{app:california}

We use the scikit-learn copy of California Housing
\citep{pace1997sparse}, with 20,640 rows and 8 features. The target is the median house
value in units of \$100{,}000, and 965 rows have the top-coded value
\$500{,}001. A random quarter of the
rows are held out for evaluation (seed 0). Features are standardised with
training-set statistics, and the target keeps its units. The network,
loss, penalty, training and refit rules are those of Friedman
(Appendix~\ref{app:protocol}), in float64 on the CPU.

Both operators use the exact Hessian, formed and factorised as on
Friedman ($P=641$). For each increase $\kappa$, $J\Delta\by$ is one
direct solve with right-hand side $C\Delta\by$. Influence is computed from its
definition, with right-hand side
$B\Delta\by=\tfrac1N\sum_j\Delta y_j\nabla_{\btheta}\ell_j$ built from
the loss gradients and its own solve. Each target change is refit from
$\btheta^\star$. $R^2$, $\rho$, $\alpha$ and $\beta$ are those of
\eqref{eq:r2} and \eqref{eq:r2_decomp}, with the averages taken over the
evaluation predictions of one target change instead of over
perturbations at one prediction.


\section{Compute}
\label{app:timing}

$J$ is never formed. A column $J_{:,j}$ costs one Hessian solve, and a
row $J_{i,:}$ costs one solve plus one Jacobian-vector product over the
training set. On the tabular data and California Housing, the solve
uses the Cholesky factor of the dense exact Hessian, which is formed
once per fitted model from Hessian-vector products over the training
set (Appendix~\ref{app:protocol}). On MNIST, where $P$ is about
41,000, the solve uses MINRES against Hessian-vector products, and $H$
is never formed. On HIGGS ($N=10^6$, $S=10^4$), one row
$J_{i,:}\in\R^{10^6}$, which is one prediction's sensitivity to every
training target, takes between 0.07 and 0.14 seconds (median 0.07) on
one RTX~3090 across 30 rows, after a one-time setup of under five
minutes that forms and factorises $H$ (last lines of
Table~\ref{tab:timing}). Computing the same row
by refitting would take $10^6$ refits, one per training target.

All experiments run on one workstation with an Intel Core i9-14900K
(PyTorch capped at four threads), 192\,GB of RAM and an NVIDIA RTX~3090
(24\,GB, CUDA 12.6). Friedman and MAGIC run on the CPU, and HIGGS and
MNIST on the GPU. Base fits and refits run in float64, which the refits
need to resolve small target changes. $J$ and influence with the exact
Hessian also run in float64, and the Gauss--Newton comparison runs in
float32 on the tabular data and in float64 on MNIST. The refits
dominate the wall-clock in Table~\ref{tab:timing}. The HIGGS row
timing above comes from a separate run that times 10 rows on each of
three trained HIGGS models.

\begin{table}[htbp]
\centering
\small
\caption{Wall-clock per seed in seconds, mean $\pm$ SD over five
seeds. The last two lines time single rows of $J$ on HIGGS and the one-time
setup before them. Base fit is one training run. $J$ columns covers
forming and factorising the exact Hessian and the solves for the
sampled columns. On MNIST it covers the MINRES solves for $J$. $K$ refits covers the refits used as ground truth.}
\label{tab:timing}
\input{table_timing}
\end{table}

\end{document}

%% file: table_r2_mlp.tex
\begin{tabular}{@{}llcccc@{}}
\toprule
Dataset & Method & $R^2$ & $\rho$ & $\alpha$ & $\beta$ \\
\midrule
Friedman\,\#1 & $J$ & $\mathbf{0.998 {\scriptstyle \pm 0.001}}$ & $0.999 {\scriptstyle \pm 0.001}$ & $1.000 {\scriptstyle \pm 0.001}$ & $-0.001 {\scriptstyle \pm 0.001}$ \\
 & Influence & $-0.416 {\scriptstyle \pm 0.162}$ & $0.023 {\scriptstyle \pm 0.098}$ & $0.675 {\scriptstyle \pm 0.047}$ & $-0.007 {\scriptstyle \pm 0.011}$ \\
 & Random & $-0.996 {\scriptstyle \pm 0.023}$ & $0.002 {\scriptstyle \pm 0.011}$ & $1.000 {\scriptstyle \pm 0.000}$ & $0.000 {\scriptstyle \pm 0.000}$ \\
\midrule
MAGIC Gamma & $J$ & $\mathbf{1.000 {\scriptstyle \pm 0.000}}$ & $1.000 {\scriptstyle \pm 0.000}$ & $1.000 {\scriptstyle \pm 0.001}$ & $-0.001 {\scriptstyle \pm 0.001}$ \\
 & Influence & $-0.203 {\scriptstyle \pm 0.061}$ & $-0.082 {\scriptstyle \pm 0.118}$ & $0.301 {\scriptstyle \pm 0.025}$ & $0.099 {\scriptstyle \pm 0.045}$ \\
 & Random & $-0.984 {\scriptstyle \pm 0.065}$ & $0.008 {\scriptstyle \pm 0.032}$ & $1.000 {\scriptstyle \pm 0.000}$ & $0.000 {\scriptstyle \pm 0.000}$ \\
\midrule
HIGGS & $J$ & $\mathbf{0.991 {\scriptstyle \pm 0.004}}$ & $0.996 {\scriptstyle \pm 0.002}$ & $1.017 {\scriptstyle \pm 0.008}$ & $-0.005 {\scriptstyle \pm 0.003}$ \\
 & Influence & $-0.208 {\scriptstyle \pm 0.095}$ & $0.026 {\scriptstyle \pm 0.099}$ & $0.467 {\scriptstyle \pm 0.021}$ & $-0.053 {\scriptstyle \pm 0.011}$ \\
 & Random & $-1.017 {\scriptstyle \pm 0.050}$ & $-0.008 {\scriptstyle \pm 0.025}$ & $1.000 {\scriptstyle \pm 0.000}$ & $0.000 {\scriptstyle \pm 0.000}$ \\
\midrule
MNIST & $J$ & $\mathbf{0.924 {\scriptstyle \pm 0.051}}$ & $0.972 {\scriptstyle \pm 0.021}$ & $0.994 {\scriptstyle \pm 0.019}$ & $-0.018 {\scriptstyle \pm 0.006}$ \\
 & Influence & $-0.203 {\scriptstyle \pm 0.047}$ & $-0.373 {\scriptstyle \pm 0.088}$ & $0.101 {\scriptstyle \pm 0.040}$ & $0.303 {\scriptstyle \pm 0.042}$ \\
 & Random & $-0.930 {\scriptstyle \pm 0.287}$ & $0.035 {\scriptstyle \pm 0.143}$ & $1.000 {\scriptstyle \pm 0.000}$ & $0.000 {\scriptstyle \pm 0.000}$ \\
\bottomrule
\end{tabular}

%% file: table_california.tex
\begin{tabular}{@{}llcccc@{}}
\toprule
Increase & Method & $R^2$ & $\rho$ & $\alpha$ & $\beta$ \\
\midrule
5\% & $J$ & $1.000$ & $1.000$ & $1.002$ & $0.000$ \\
 & Influence & $0.213$ & $0.470$ & $0.549$ & $-0.040$ \\
\midrule
10\% & $J$ & $0.996$ & $0.998$ & $1.002$ & $0.001$ \\
 & Influence & $0.215$ & $0.471$ & $0.549$ & $-0.040$ \\
\midrule
20\% & $J$ & $0.994$ & $0.997$ & $1.009$ & $0.000$ \\
 & Influence & $0.237$ & $0.492$ & $0.553$ & $-0.041$ \\
\midrule
50\% & $J$ & $0.977$ & $0.989$ & $1.011$ & $0.000$ \\
 & Influence & $0.274$ & $0.525$ & $0.554$ & $-0.041$ \\
\bottomrule
\end{tabular}

%% file: table_r2_krr.tex
\begin{tabular}{lcc}
\toprule
$\lambda$ & $R^2$ ($J$) & $R^2$ (Influence) \\
\midrule
$10^{-1}$ & $+1.000$ & $+0.131$ \\
$10^{-3}$ & $+1.000$ & $-0.062$ \\
$10^{-6}$ & $+1.000$ & $-0.002$ \\
\bottomrule
\end{tabular}

%% file: table_tabular_diag.tex
\begin{tabular}{@{}lrrr@{}}
\toprule
 & Friedman\,\#1 & MAGIC Gamma & HIGGS \\
\midrule
Training points $N$ & $2{,}000$ & $14{,}265$ & $1.0{\cdot}10^6$ \\
Evaluation predictions $S$ & $2{,}000$ & $4{,}755$ & $10{,}000$ \\
Parameters $P$ & $769$ & $769$ & $3{,}841$ \\
Perturbations $K$ per seed & $100$ & $100$ & $100$ \\
Seeds & 5 & 5 & 5 \\
Stopping tolerance on the gradient norm, smallest & $1.0{\cdot}10^{-6}$ & $1.0{\cdot}10^{-6}$ & $3.8{\cdot}10^{-8}$ \\
Base fits: largest gradient norm & $1.0{\cdot}10^{-6}$ & $8.1{\cdot}10^{-7}$ & $3.5{\cdot}10^{-8}$ \\
Refits at or above the tolerance & 8 of 500 & 0 of 500 & 0 of 500 \\
Refits: largest gradient norm / $\|C\Delta\mathbf{y}\|$ & $2.5{\cdot}10^{-4}$ & $2.8{\cdot}10^{-1}$ & $7.3{\cdot}10^{-1}$ \\
Exact Hessian: smallest eigenvalue & $8.7{\cdot}10^{-4}$ & $1.1{\cdot}10^{-4}$ & $9.6{\cdot}10^{-5}$ \\
Direct solves: largest relative error & $7.3{\cdot}10^{-14}$ & $3.5{\cdot}10^{-15}$ & $1.3{\cdot}10^{-14}$ \\
$\|B - C\operatorname{diag}(r)\| / \|B\|$, largest & $2.9{\cdot}10^{-14}$ & $6.2{\cdot}10^{-14}$ & $9.3{\cdot}10^{-16}$ \\
Gauss--Newton CG iterations, largest & $128$ & $26$ & $31$ \\
$J$, Gauss--Newton Hessian: $R^2$ & $0.928 {\scriptstyle \pm 0.014}$ & $0.949 {\scriptstyle \pm 0.015}$ & $0.956 {\scriptstyle \pm 0.015}$ \\
Influence, Gauss--Newton Hessian: $R^2$ & $-0.489 {\scriptstyle \pm 0.190}$ & $-0.213 {\scriptstyle \pm 0.060}$ & $-0.190 {\scriptstyle \pm 0.085}$ \\
\bottomrule
\end{tabular}

%% file: table_mnist_diag.tex
\begin{tabular}{@{}lr@{}}
\toprule
Training points $N$ & $2{,}000$ \\
Evaluation predictions $S$ & $2{,}000$ \\
Retained pixels $d$ & $629$--$640$ \\
Parameters $P$ & $40{,}970$--$41{,}674$ \\
Perturbations $K$ per seed & $20$ \\
Seeds & $5$ \\
Test accuracy & $0.906$--$0.919$ \\
Training points: median $p_{j,t_j}$ & $0.920$--$0.931$ \\
Base fits: largest gradient norm & $9.9{\cdot}10^{-7}$ \\
Refits at or above $10^{-6}$ & 1 of 100 \\
Refits: largest gradient norm / $\|C\Delta\mathbf{y}\|$ & $8.3{\cdot}10^{-3}$ \\
Gauss--Newton CG iterations, largest & $86$ \\
MINRES iterations, median per seed & $626$--$875$ \\
MINRES relative solve error, largest & $2.1{\cdot}10^{-7}$ \\
$\min u^\top H u / u^\top u$ & $7.1{\cdot}10^{-5}$ \\
$\|B - C\operatorname{diag}(r)\| / \|B\|$, largest & $9.2{\cdot}10^{-14}$ \\
$J$, Gauss--Newton Hessian: $R^2$ & $0.739 {\scriptstyle \pm 0.065}$ \\
\bottomrule
\end{tabular}

%% file: table_timing.tex
\begin{tabular}{lrcccl}
\toprule
Dataset & $N_{\mathrm{train}}$ & Base fit (s) & $J$ columns (s) & $K$ refits (s) \\
\midrule
Friedman\,\#1 (CPU) & $2{,}000$ & $7.2{\scriptstyle \pm 0.7}$ & $0.35{\scriptstyle \pm 0.00}$ & $73{\scriptstyle \pm 91}$\,\scriptsize($K{=}100$) \\
MAGIC Gamma (CPU) & $14{,}265$ & $7.7{\scriptstyle \pm 0.4}$ & $2.1{\scriptstyle \pm 0.0}$ & $14{\scriptstyle \pm 2}$\,\scriptsize($K{=}100$) \\
HIGGS (GPU) & $1.0{\cdot}10^6$ & $337{\scriptstyle \pm 141}$ & $263{\scriptstyle \pm 0}$ & $444{\scriptstyle \pm 94}$\,\scriptsize($K{=}100$) \\
MNIST (GPU) & $2{,}000$ & $21{\scriptstyle \pm 7}$ & $138{\scriptstyle \pm 22}$ & $379{\scriptstyle \pm 218}$\,\scriptsize($K{=}20$) \\
\midrule
\multicolumn{5}{@{}l}{HIGGS (GPU), one row of $J$: median 0.07\,s, range 0.07--0.14\,s, 30 rows} \\
\multicolumn{5}{@{}l}{HIGGS (GPU), one-time setup per model: median 271.11\,s, range 263.72--271.34\,s} \\
\bottomrule
\end{tabular}